\def\csname ver@subfig.sty\endcsname{}
\theoremstyle{plain}
\newtheorem{theorem}{Theorem}[section]
\newtheorem{lemma}[theorem]{Lemma}
\newtheorem{corollary}[theorem]{Corollary}
\theoremstyle{definition}
\newtheorem{definition}[theorem]{Definition}
\newtheorem{assumption}[theorem]{Assumption}
\theoremstyle{remark}
\newtheorem{remark}[theorem]{Remark}
\theoremstyle{definition}
\theoremstyle{definition}
\theoremstyle{definition}
\crefname{assumption}{Assumption}{Assumptions}
\theoremstyle{definition}
\theoremstyle{definition}
\newcommand{\Lsystem}{\ensuremath{L_{\dot{x}}}}
\newcommand{\Nzero}{\ensuremath{N_0}}
\newcommand{\DNmax}{\Delta N_{\text{max}}}
\newcommand{\X}{\mathcal{X}}
\newcommand{\U}{\mathcal{U}}
\newcommand{\x}{x}
\newcommand{\uin}{u}
\newcommand{\xaug}{\mb{z}}
\newcommand{\sigmains}{\sigma_{i,\text{ns}}}
\newcommand{\StatSp}{\mathcal{X}}
\newcommand{\InputSp}{\mathcal{U}}
\newcommand{\dimx}{n}
\newcommand{\dimu}{m}
\newcommand{\classkfun}{\alpha}
\newcommand{\f}{f}
\newcommand{\g}{g}
\newcommand{\SafeSet}{\mathcal{C}}
\newcommand{\policy}{\pi}
\newcommand{\R}{\mathbb{R}}
\newcommand{\C}{\mathcal{C}}
\newcommand{\mb}[1]{\mathbf{ #1 }}
\newcommand{\bs}[1]{\boldsymbol{ #1 }}
\DeclareMathOperator*{\argmin}{argmin}
\newcommand{\lmat}{\begin{bmatrix}}
\newcommand{\rmat}{\end{bmatrix}}
\icmltitlerunning{Learning Safe Control via On-the-Fly Bandit Exploration}
\begin{document}

\twocolumn[
\icmltitle{Learning Safe Control via On-the-Fly Bandit Exploration}




\begin{icmlauthorlist}
\icmlauthor{Alexandre Capone}{cmu}
\icmlauthor{Ryan Cosner}{caltech}
\icmlauthor{Aaron Ames}{caltech}
\icmlauthor{Sandra Hirche}{tum}
\end{icmlauthorlist}

\icmlaffiliation{cmu}{Robotics Institute,  Carnegie Mellon University, Pittsburgh, PA, USA}
\icmlaffiliation{caltech}{ Department of Mechanical and Civil Engineering , California Institute of Technology, Pasadena, CA, USA}
\icmlaffiliation{tum}{TUM School of Computation, Information and Technology, Technical University of Munich, Munich, Germany}

\icmlcorrespondingauthor{Alexandre Capone}{acapone2@andrew.cmu.edu}
\icmlkeywords{Machine Learning, ICML}

\vskip 0.3in
]



\printAffiliationsAndNotice{\icmlEqualContribution} 

\begin{abstract}
Control tasks with safety requirements under high levels of model uncertainty are increasingly common. Machine learning techniques are frequently used to address such tasks, typically by leveraging model error bounds to specify robust constraint-based safety filters. However, if the learned model uncertainty is very high, the corresponding filters are potentially invalid, meaning no control input satisfies the constraints imposed by the safety filter. While most works address this issue by assuming some form of safe backup controller, ours tackles it by collecting additional data on the fly using a Gaussian process bandit-type algorithm. We combine a control barrier function with a learned model to specify a robust certificate that ensures safety if feasible. Whenever infeasibility occurs, we leverage the control barrier function to guide exploration, ensuring the collected data contributes toward the closed-loop system safety. By combining a safety filter with exploration in this manner, our method provably achieves safety in a setting that allows for a zero-mean prior dynamics model, without requiring a backup controller. 
To the best of our knowledge, it is the first safe learning-based control method that achieves this. 



\end{abstract}

\begin{figure*}[th]
\centering
\begin{subfigure}[b]{0.99\textwidth}
\includegraphics[trim={0pt 479pt 0pt 110pt},clip,width = 0.99\columnwidth]{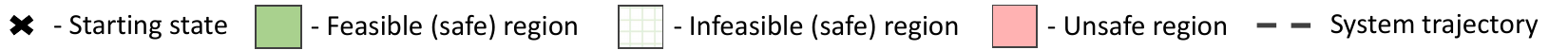}
\end{subfigure}
\begin{subfigure}[b]{0.31\textwidth}
\includegraphics[trim={120pt 90pt 250pt 170pt},clip,width = 0.99\columnwidth]{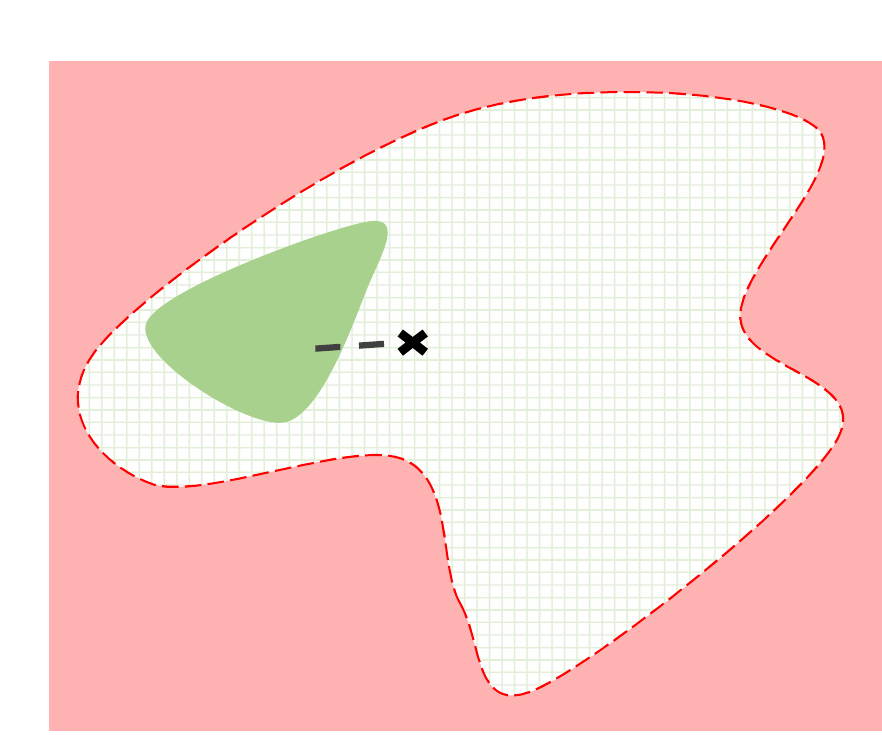}
\caption{System state exits the region where the safety filter is feasible. \\\hspace{\textwidth}} 
\label{fig:constr_satisf_uniform}
\end{subfigure}
\hfill
\begin{subfigure}[b]{0.31\textwidth}
\includegraphics[trim={120pt 90pt 250pt 170pt},clip,width = 0.99\columnwidth]{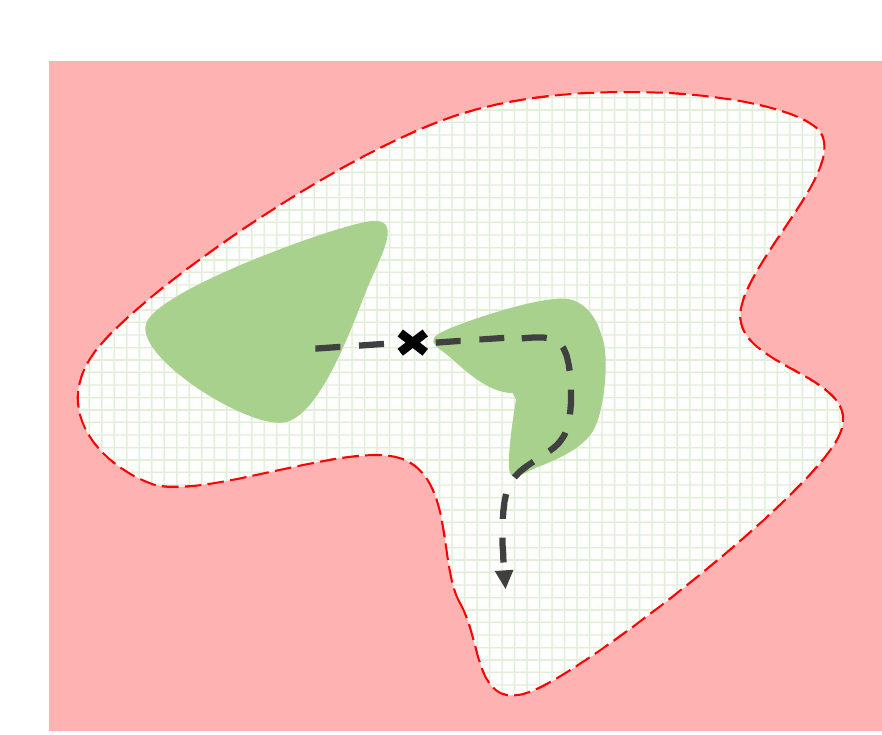}
\caption{Our algorithm computes exploratory control inputs using a bandit-type technique.} 
\label{fig:cost_uniform}
\end{subfigure}
\hfill
\begin{subfigure}[b]{0.31\textwidth}
\includegraphics[trim={120pt 90pt 250pt 170pt},clip,width = 0.99\columnwidth]{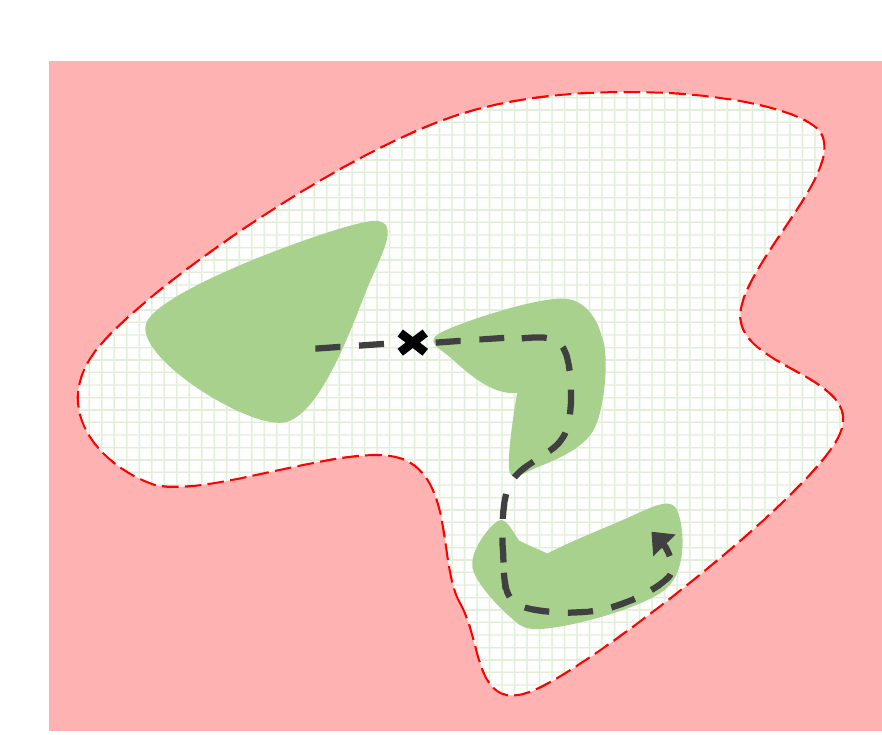}
\caption{\Cref{thm:mainresult} guarantees that the safety filter always becomes feasible before the boundary of the safe set is reached.} 
\label{fig:cost_uniform}
\end{subfigure}
\caption{Illustration of our method (\Cref{alg:feasibility_recovering_algorithm_temporal}) and theoretical result (\Cref{thm:mainresult}).}
\label{fig:illustration_of_algorithm}
\end{figure*}

\section{Introduction}
With the growing proliferation of robotics in safety-critical fields, e.g., autonomous vehicles, medical robotics, and aerospace systems, the need for methods that ensure the safety of systems and their users has become paramount. 
In control tasks, guaranteeing safety has become synonymous with guaranteeing that the system always satisfies a pre-specified set of constraints. While there exist various tools to achieve safety whenever the system is known perfectly \citep{bansal2017hamilton,agrawal2017discrete,ames_control_2017,wabersich_linear_2018, zeng2021safety}, doing so if the dynamics have to be learned poses a significantly more difficult challenge.


To address safety in a control setting, existing approaches typically attempt to bound the model error and assess how its worst-case estimate affects safety. To estimate the model error, frequently used frameworks include ensembles of neural networks \citep{chua2018deep,curi2020efficient} or Gaussian processes (GPs) \citep{capone2022gaussian,rodriguez2021learning}. The impact of the model error on safety is then frequently estimated by predicting the worst-case trajectory multiple steps into the future \citep{akametalu2015reachability,koller2018learning,hewing2020learning}. Alternatively, safety filters attempt to condense the long-term impact of a control input on safety \citep{berkenkamp2017safe, cheng2019end, jagtap2020control,castaneda2021pointwise,yunke2025safer}. A crucial drawback of most of these methods is that they assume the model uncertainty is always small enough for the conservative safety constraints to be feasible. In other words, if the uncertainty becomes too large, the underlying methods are no longer applicable. 

Motivated by the aforementioned challenges, we present a learning-based control algorithm that ensures safety by exploring on the fly. We use a Gaussian process model to estimate model uncertainty, which we combine with a control barrier function to compute a robust safety filter. Whenever our safety filter becomes infeasible, we employ the control barrier function to guide exploration in a manner that is informative for safety. Formally, this is done by leveraging the lower confidence bound of the certificate constraint function for certifiably safe control inputs, and the upper confidence bound for exploration. Our approach draws heaviliy from ideas from Bayesian optimization, where exploration is done in a similar manner. Overall, our main contributions are as follows:
\begin{enumerate}
    \item We present the first safe control algorithm that addresses infeasible safety filters by exploring online using a Gaussian process bandit algorithm.
    \item We rigorously analyze our approach and discuss our main theorem, which shows the interdependence between model uncertainty, sampling time, and safety.
    \item We showcase our approach using two numerical simulations where we achieve safety using a control barrier function, a dynamics model with a zero-mean prior, and no prior measurements. To the best of our knowledge, this setting cannot be addressed by current state-of-the-art methods. 
\end{enumerate}

The remainder of this paper is structured as follows. In \Cref{section:problemsetting} we describe the problem setting considered in this paper, together with some background on control barrier functions and Gaussian processes. Our main contribution, which includes an exploration-driven safe control algorithm and safety guarantees, is presented in \Cref{section:mainresult}. In \Cref{sect:numericalvalidatoin}, we apply our method using numerical simulations of cruise control and a quadcopter. 
We finalize the paper with some conclusions in \Cref{section:conclusion}.

\section{Background and Problem Setting}
\label{section:problemsetting}
 
We consider a system that is affine in the control inputs
\begin{align}
    \dot{x} = f(\x) + g(\x) u  \label{eq:ol_dyn}
\end{align}
where $\x \in \StatSp \subseteq \R^\dimx $ and $u \in \mathcal{U} \subseteq \R^m$, and $f: \R^\dimx \to \R^\dimx $ and $g: \R^\dimx \to \R^{n \times m } $ are (partially) unknown locally Lipschitz continuous functions that represent the drift dynamics and the input matrix, respectively. Eq. \eqref{eq:ol_dyn} represents a broad class of physical systems. Though our method can be straightforwardly extended to the more general case $\dot{x} = f(\x,\uin)$, this structure significantly facilitates the computation of safe control inputs, as described in \Cref{subsect:cbfs}. We further assume that $\mathcal{X}$ is an open and connected set and that $\mathcal{U}$ is compact. When a locally Lipschitz controller $\policy: \R^\dimx \to \R^m $ is used, we can define the closed-loop system:
\begin{align}
\dot{x} = f(\x) + g(\x) \policy(\x). \label{eq:cl_dyn}
\end{align}
We also assume to know an upper bound $\Lsystem\in \mathbb{R}_+$ for the time-derivative of the dynamics, as specified by the following assumption.
\begin{assumption}
\label{assumption:knownboundontimederivative}
    There exists a known positive constant $\Lsystem$, such that $\Vert \dot{x}\Vert_2\leq \Lsystem$ holds for all $ x\in \X$ and $u \in \U$.
\end{assumption}

We define safety by considering a \textit{safe set} $\SafeSet$, as specified in the following. 

\begin{definition}[Safety]
    The closed-loop system \eqref{eq:cl_dyn} is said to be safe (with respect to $\SafeSet$) if $\x(0) \in \SafeSet$ implies $\x(t) \in \SafeSet$ for all $t \geq 0 $.
\end{definition}

We consider the case where the safe set $\SafeSet$ corresponds to the superlevel set of some known continuously differentiable function $h: \R^\dimx \to \R$ with 0 a regular value \footnote{A function $h$ has 0 as a \textbf{\textit{regular value}} if $h(\x) = 0  \implies \frac{\partial h}{\partial \x}(\x) \neq 0 $. }: 
\begin{align*}
    \C & \triangleq \{ x \in \R^\dimx ~|~ h(\x) \geq 0 \}. 
\end{align*}
This specification of $\SafeSet$
allows us to achieve safety through the control barrier function framework, which we discuss in the following section. This paper considers a compact $\SafeSet$ and a Lipschitz continuous functions $h$, as stated in the following assumption.

\begin{assumption}
\label{assumption:Lipschitzh}
    The safe set $\SafeSet$ is compact and the function $h$ admits a Lipschitz constant $L_h$, i.e., $\vert h(\x) - h(\x') \vert \leq L_h \Vert x-x'\Vert_2$ holds for all $\x, x' \in \mathcal{X}$. 
\end{assumption}


Given an arbitrary nominal controller $\policy_\textrm{nom}: \mathcal{X} \to \mathcal{U}$, we aim to design a control law $\policy$ that renders the closed loop system \eqref{eq:cl_dyn} safe while following $\policy_\textrm{nom}$ as closely as possible. We make the following assumption regarding the nominal controller:
\begin{assumption}
\label{assumption:locally_lipschitz_controller}
    The nominal controller $\policy_\textrm{nom}$ is locally Lipschitz in $\X$.
\end{assumption}
\Cref{assumption:locally_lipschitz_controller} is not restrictive, as it includes a broad class of controllers, including most types of neural networks. 

\subsection{Safety through Control Barrier Functions}
\label{subsect:cbfs}

Our method attempts to ensure safety using control barrier functions, which we introduce in the following.

\begin{definition}[Control Barrier Function (CBF) \citep{ames_control_2017}]
    Let $\SafeSet \subset\mathcal{X} $ be the 0-superlevel set of a continuously differentiable function $h: \R^\dimx \to \R$  with 0 a regular value. Furthermore, let $\classkfun$ be a radially unbounded, strictly monotonically increasing function with $\classkfun(0) = 0$.
 The function $h$ is a control barrier function (CBF) for \eqref{eq:ol_dyn} on $\SafeSet$ 
 if, for all $\x \in \StatSp$, 
    \begin{align}
    \label{eq:cbf_constraint}
    \begin{split}
        \sup_{u \in \mathcal{U}} \dot{h}(\x, u)  \triangleq  \sup_{u \in \mathcal{U}} \frac{\partial h }{\partial \x }(\x)\left( f(\x) + g(\x)u \right)> -\classkfun(h(\x)) .
        \end{split}
    \end{align}
\end{definition}
Intuitively, \eqref{eq:cbf_constraint} states that there exists a control input that decreases the time derivative of the CBF as the boundary of the safe set $\SafeSet$ is approached, effectively slowing down the system as the boundary gets closer. If $\f$ and $\g$ were known perfectly, we could leverage this property to compute certifiably safe control inputs by including a CBF increase condition as a constraint when synthesizing control inputs: 
\begin{align}
\label{eq:cbf-qp} 
\begin{matrix} 
\policy(\x) =
\argmin\limits_{u \in \mathcal{U}}  &\Vert u - \policy_\textrm{nom}(\x) \Vert_2 \\
    \hspace{30pt} \qquad \quad  \textrm{s.t. } &  \dot{h}(\x, \uin) 
    \geq - \classkfun(h(\x)). \end{matrix}  
\end{align}
\noindent Given the affine form of this constraint, this safety filter is a quadratic program (QP) that has a closed form solution \citep{ames_control_2017} and can be solved fast enough for online safe controller synthesis \citep{gurriet2018towards}. 
Since we are dealing with unknown systems \eqref{eq:cbf_constraint} is not directly applicable, and we need to formulate a robust CBF constraint that accounts for potential modeling errors. To this end, we assume the CBF constraint is feasible up to a positive margin $\epsilon>0$:
\begin{assumption}[Robust CBF Feasibility] \label{assp:robust_cbf_feasibility}
There exists a radially unbounded, strictly monotonically increasing function $\classkfun$ with $\classkfun(0) = 0$ 
and a positive scalar $\epsilon >0$, such that
    \begin{align}
        \sup_{u \in \mathcal{U}} \frac{\partial h}{\partial \x}(\x)\left( f(\x) + g(\x) u \right)\geq - \classkfun (h(\x)) + \epsilon \label{eq:robust_cbf}
    \end{align}
    holds for all $\x \in \StatSp$.
\end{assumption}


Given the existence of a CBF, we note that this assumption is not very conservative. Intuitively, it states that we can violate the non-robust CBF by a margin of at most $\epsilon$.

\subsection{Gaussian Processes and Reproducing Kernel Hilbert Spaces}

We now introduce the tools used to model the the unknown functions $\f$ and $\g$.

Consider a prior model of the system $\hat{f}$ and $\hat{g}$, which can be set to zero if no prior knowledge is available. For the remainder of this paper, we assume $\hat{f}=\hat{g}=0$ for simplicity. We then model the residuals $f-\hat{f}$ and $g-\hat{g}$ jointly by using a GP model. To obtain theoretical guarantees on the growth of the model error as new data is added, we assume that the residual model belongs to a reproducing kernel Hilbert space (RKHS). This is codified later in this section. In the following, we review both GPs and RKHSs, and provide some preliminary theoretical results.

\noindent A GP is a collection of random variables, of which any finite number is jointly Gaussian distributed. GPs are fully specified by a prior mean, which we set to zero without loss of generality, and kernel $k: \mathcal{X} \times \mathcal{X} \rightarrow \mathbb{R}$. We model each dimension of the unknown dynamics $\f(\x) + \g(\x)\uin$ using a separate GP.
This is achieved by employing composite kernels
\begin{align}
\label{eq:compositekernel}
    \begin{split}
        k_i(\xaug,\xaug') \triangleq k_{f_i}(\x,x') + \sum_{j=1}^m u_j k_{g_{i,j}}(\x,x')u_j',
    \end{split}
\end{align}
where $\xaug \triangleq (\x^\top,\uin^\top)^\top $. The kernels $k_{f_i}$ and $k_{g_{i,j}}$ capture the behavior of the individual entries of the residuals $\f$ and $\g$, 
respectively. In this paper, we do not require a specific form of kernel $k_{f_i}$ and $k_{g_{i,j}}$, though the choice of kernel significantly affects performance.
We employ a composite kernel \eqref{eq:compositekernel} because this allows us to leverage measurements of $\dot{x}$ to improve our learned model. 

Consider $N$ noisy measurements $\mathcal{D}_{i,N} \triangleq \{ \xaug^{(q)}, y_i^{(q)}\}_{q=1,...,N}$ of the i-th entry of the time-derivative of the state, where 
\begin{align}
\begin{split}
y_i^{(q)} = &f_i(\x^{(q)}) + \sum_{j=1}^m g_{i,j}(\x^{(q)})u_j^{(q)} + \xi^{(q)}_i
\end{split}
\end{align} and the measurement noise $\xi^{(q)}$ satisfies the following assumption. 

\begin{assumption}
    \label{assumption:noise_cbf} 
    The measurement noise $\xi^{(q)}$ is iid $\sigmains$-sub Gaussian 
    for every $i\in\{1,...,n\}$.
\end{assumption}

The posterior of the model at an arbitrary state $\xaug^*$ is then given by
\begin{align*}
    \begin{split}
        & \mu_{i,N}(\xaug^*) \triangleq \mb{k}_{i,*}^\top\left(\mb{K}_{i,N} + \sigmains^2 \mb{I}_N\right)^{-1}\mb{y}_i ,\\ 
        & \sigma^2_{i,N}(\xaug^*) \triangleq 
        k_i(\xaug^*,\xaug^*) - \mb{k}_{i,*}^\top\left(\mb{K}_{i,N}  + \sigmains^2 \mb{I}_N\right)^{-1}\mb{k}_{i,*},
        \end{split}
    \end{align*}
where $\mb{k}_{i,*} \triangleq (k_i(\xaug^*,\xaug^{(1)}),...,k_i(\xaug^*,\xaug^{(N)}))^\top$, and the entries of the covariance matrix are given by $[\mb{K}_{i,N}]_{pq} = k_i(\xaug^{(p)},\xaug^{(q)})$. In our approach, we employ $\mu_{i,N}$ as a model of our system, and $\sigma_{i,N}$ as a measure of uncertainty, which we employ to inform the data-collection trigger.


In the following, we refer to the mean and covariance matrix of the full multivariate Gaussian process model as
\begin{align}
\label{eq:multivariategp}
    \begin{split}
        \bs{\mu}_N(\cdot) &\triangleq (\mu_{{1,N}}(\cdot), \ldots, \mu_{{n,N}}(\cdot))^\top, \\ \bs{\Sigma}_N^2(\cdot)&\triangleq \text{diag}(\sigma^2_{1,N}(\cdot),\ldots,\sigma^2_{n,N}(\cdot)).
    \end{split}
\end{align}
Furthermore, we employ $\mathcal{D}_N =\{ \xaug^{(q)}, \mb{y}^{(q)}\}_{q=1,...,N}$, where $\mb{y}^{(q)}\triangleq (y_1^{(q)},...,y_n^{(q)})^\top$, to refer to the full data set corresponding to $N$ system measurements.

To be able to ensure safety, we need to estimate how data affects the worst-case model error. Though this is impossible in a general setting, an estimate is possible if we assume that the entries of $\f$ and $\g$ belong to an RKHS, a very rich function space specified by the composite kernel \eqref{eq:compositekernel}.
\begin{assumption}
\label{assumption:finiteRKHSnorm_CBFS}
    For every $i=1,..,n$, the $i$-th entry of $\f(\x) + \g(\x)\uin$ 
    belongs to the RKHS with reproducing kernel $k_i$, and the corresponding RKHS norm is bounded by a known positive scalar $B_{i} \in (0,  \infty)$.
\end{assumption}

The satisfaction of \Cref{assumption:finiteRKHSnorm_CBFS} hinges on choosing the kernels correctly. In practice, if little is known about the system, then so-called ``universal kernels''  (e.g. Matérn or squared-exponential kernels) are often employed, which can approximate continuous functions arbitrarily accurately \citep{micchelli2006universal}. Furthermore, there exist several techniques to mitigate kernel specification that can easily be applied to the methods described in this paper \cite{berkenkamp2019no, fiedler2021practical, capone2022gaussian}.

\section{Safe Control Through Bandit Exploration}
\label{section:mainresult}

In this section, we present our main algorithm and theoretical guarantees.


Our algorithms employs two key quantities: the \textit{lower confidence bound} (LCB) and \textit{upper confidence bound} (UCB) of the time derivative of $h$ given the data. These quantities are computed as
\begin{align}
\label{eq:hdot_lcb}
         LCB_N(\x, \uin)&=\frac{\partial h }{\partial \x } (\x)\bs{\mu}_N(\x, u) -  L_h  \beta_N\sqrt{\text{tr}\left(\mb{\Sigma}_N(\x, \uin)\right)} \\
\label{eq:hdot_ucb}
         UCB_N(\x, \uin)&=\frac{\partial h }{\partial \x } (\x)\bs{\mu}_N(\x, u) +  L_h  \beta_N\sqrt{\text{tr}\left(\mb{\Sigma}_N(\x, \uin)\right)},
\end{align}
where $L_h >  0 $ is the global Lipschitz constant of $h$, specified in Assumption \ref{assumption:Lipschitzh} and 
\begin{align}\label{eq:betan} \beta_N \triangleq \max_{i\leq n} \left(B_i + \sigmains\sqrt{2\left(\gamma_{i,N} + 1 + \log{(\dimx\delta^{-1})}\right) }\right).\end{align}
Here, $\gamma_{i,N}$ denotes the maximal information gain after $N$ rounds of data collection, and is computed as
   \begin{align*}
        \gamma_{i,N} = \max_{\xaug^{(1)},...\xaug^{(N)}} \frac{1}{2}\log\left\vert \mb{I}_N + \sigmains^{-1}\mb{K}_{i,N} \right\vert.
    \end{align*}
 Intuitively, the LCB \eqref{eq:hdot_lcb} and UCB \eqref{eq:hdot_ucb} respectively correspond to the lowest and highest permissible value for $\dot{h}(\x, \uin)$ given the model error bound. 

Given the nominal controller $\policy_\textrm{nom}$, we leverage the LCB \eqref{eq:hdot_lcb} to compute robustly safe control inputs by solving the second-order cone program
\begin{align}
\begin{matrix}\policy_{N, \textrm{safe}}(\x) = \argmin\limits_{u \in \mathcal{U}}  &\Vert u - \policy_\textrm{nom}(\x) \Vert_2 \\
   \qquad \hspace{30pt} \textrm{s.t. } &\hspace{-10pt}  LCB_N(\x, \uin) \geq - \classkfun(h(\x)) +\frac{\epsilon}{2}. \end{matrix} \label{eq:gp_cbf_socp}
\end{align}
Note that, due to the nature of the composite kernels \eqref{eq:compositekernel}, $\bs{\mu}_N(\x, u)$ is a linear function of $u$ and $\mb{\Sigma}_N(\x, \uin)$ is a diagonal matrix whose entries are positive definite quadratic functions of $u$. Hence, \eqref{eq:gp_cbf_socp} can be solved straightforwardly with conventional second-order cone program optimizers.

We include the term $\frac{\epsilon}{2}$ in the constraint in \eqref{eq:gp_cbf_socp} because this keeps the system from getting too close to the boundary. This ensures that when \eqref{eq:gp_cbf_socp} becomes infeasible, there is still some distance to go until it reaches the boundary of the safe set, leaving us with enough time to explore the system.

As long as the robust safety filter \eqref{eq:gp_cbf_socp} is strictly feasible, the resulting control policies are locally Lipschitz continuous \citep{castaneda2022probabilistic}, yielding unique and well-behaved closed-loop system trajectories.

If \eqref{eq:gp_cbf_socp} is feasible for all $\x \in \SafeSet$, we can leverage standard results from CBF theory \cite{ames_control_2017} to show that the resulting trajectory is safe. However, since \eqref{eq:gp_cbf_socp} includes model uncertainty, it becomes potentially infeasible in regions of the state space where uncertainty is high. We address this by employing a bandit-type exploration algorithm to acquire new data whenever infeasibility occurs. Our algorithm is designed to collect sufficiently informative data to render \eqref{eq:gp_cbf_socp} feasible before exiting the safe set. We now describe the corresponding sampling scheme in detail.

Let $N$ denote the number of data collected up to the current time. If \eqref{eq:gp_cbf_socp} is strictly feasible, it is used to compute control inputs. As soon as strict feasibility is impossible, we switch to our bandit exploration strategy. This corresponds to the first instance in time when \begin{align}
\label{eq:triggering_condition}
\begin{split}\max_{\uin\in \InputSp} LCB_N(\x, \uin) = 
     - \classkfun(h(\x)) + \frac{\epsilon}{2}
     \end{split}
\end{align}
holds and can be easily verified using a quadratic program. We denote this point in time by $t_{N+1}$.
At this point, we switch to an exploration-driven controller $\bar{\policy}_{N+1}$, which can be chosen freely, provided that it is locally Lipschitz continuous and satisfies $\bar{\policy}(\x(t_{N+1})) = \uin^{(N+1)}$, where $ \uin^{(N+1)}$ maximizes the upper confidence bound
\begin{align}
\label{eq:gp_ucb}
\begin{split}
   \uin^{(N+1)} = & \arg\max_{u\in\mathcal{U}} UCB_N(\x(t_{N+1}),\uin).
   \end{split}
\end{align}
In our experiments in \Cref{sect:numericalvalidatoin}, we choose $\bar{\policy}(\x) \equiv \uin^{(N+1)}$. Choosing a control input this way is very similar to bandit exploration algorithms, which aim to find the maximum of an unknown function. In our case, we wish to maximize the unknown time derivative of the CBF $\dot{h}(\x, \uin)$ given the state $\x$, which we know to satisfy \Cref{assp:robust_cbf_feasibility}. We apply $\bar{\policy}_{N+1}$ for a time length of $\Delta t$, where $\Delta t$ corresponds to the data sampling frequency and is specified a priori. We then collect a noisy measurement of the time-derivative of the system
\begin{align*}
    &\mb{y}^{(N+1)} = \f(\x(t_{N+1})) +  \g(\x(t_{N+1}))\uin^{(N+1)}  + \mb{\xi}^{(N+1)}
\end{align*}
and update the GP model with the measurement data pair $(\xaug^{(N+1)}, \mb{y}^{(N+1)})$, where
\[ \xaug^{(N+1)} = \left(\Big(\x(t_{N+1})\Big)^\top, \left(\uin^{(N+1)}\right)^\top\right)^\top.\]
After a time of $\Delta t$ has passed, we check if the optimization problem \eqref{eq:gp_cbf_socp} is strictly feasible. To this end, we check if
\begin{align}
    \max\limits_{\uin\in \InputSp} LCB_N(\x, \uin) >
     - \classkfun(h(\x)) + \frac{\epsilon}{2}
\end{align}
holds. If so, we apply $\policy_{N,\textrm{safe}}$. Otherwise, we repeat the sampling procedure. 
These steps are summarized in \Cref{alg:feasibility_recovering_algorithm_temporal}.

\begin{remark}
    The condition \eqref{eq:triggering_condition} specifies that exploration starts when strict infeasibility is not given, as opposed to waiting until infeasibility occurs. In general, it is not advisable to wait for infeasibility to start exploring, as this means that the control law is not well-defined,  which can be detrimental to control.
\end{remark}

We then show that, if we choose $\Delta t$ high enough, the closed-loop system under \Cref{alg:feasibility_recovering_algorithm_temporal} is safe with high probability.


\begin{algorithm}[t]
\caption{Safe Control via On-the-Fly Bandit Exploration}
\label{alg:feasibility_recovering_algorithm_temporal}
\begin{algorithmic}[1]
 \renewcommand{\algorithmicrequire}{\textbf{Input:}}
 \renewcommand{\algorithmicensure}{\textbf{Output:}}
 \REQUIRE Sampling time $\Delta t$, GP prior, CBF~$h$, class-$\mathcal{KL}$ function $\classkfun$
 \STATE Set EXPLORE = FALSE
  \FOR{$t \in [0, \infty)$}
   \IF{{$\text{EXPLORE==FALSE}$}}
     \IF{$\max_{u \in \mathcal{U}} LCB_N(\x, \uin) > -\classkfun(h(\x)) + \frac{\epsilon}{2}$}
  \STATE Solve \eqref{eq:gp_cbf_socp} and apply $\policy_{N, \textrm{safe}}(\x)$ 
  \ELSE
  \STATE Set {EXPLORE=TRUE}
    \STATE Set $N = N+ 1$.
  \STATE Set $t_{N} = t$. 
  \STATE Set $\x^{(N)} = x$.
    \STATE Compute $\uin^{(N)}$ by solving \eqref{eq:gp_ucb}.
  \ENDIF
  \ENDIF
  \IF{$\text{EXPLORE==TRUE}$}
   \IF{$t < t_{N} + \Delta t$}
\STATE Apply a locally Lipschitz controller $\bar{\policy}$ with $\bar{\policy}(\x^{(N)}) = \uin^{(N)}$. \label{codeline:admissibleinputtemporal} 
\STATE Collect noisy measurement 
$\mb{y}^{(N)} = \dot{x}(t_N) {-\hat{f}\left(\x(t_N)\right)} - \hat{g}\left(\x(t_N)\right)u^{(N)} + \mb{\xi}^{(N)}$
\ELSIF{{$t = t_{N} + \Delta t$}}
\STATE Set EXPLORE=FALSE
\STATE Set $\mathcal{D}_N = \mathcal{D}_{N-1} \cap \{\xaug^{(N)}, \mb{y}^{(N)}\} $ and update GP.
\ENDIF

\label{codeline:datasizeupdatestep} 
  \ENDIF
  \ENDFOR
 \end{algorithmic} 
 \end{algorithm}

\begin{theorem}
\label{thm:mainresult} 
Let \Cref{assumption:knownboundontimederivative,assumption:Lipschitzh,assp:robust_cbf_feasibility,assumption:noise_cbf,assumption:finiteRKHSnorm_CBFS,assumption:locally_lipschitz_controller} hold. Choose $\delta \in [0,1]$ and $\beta_N$ as in \eqref{eq:betan}. Moreover, choose the sampling time as
\begin{align}
\label{eq:sampling_frequency_specification}
    \Delta t > \frac{\epsilon}{L_\classkfun L_h \Lsystem\DNmax},
\end{align}
where $\DNmax$ satisfies
\begin{align}
\label{eq:Nmaxspecification}
  \DNmax >   \frac{32 \beta_{\DNmax}^2 L_h^2}{\epsilon^2\log{\left(1 + \sigmains^{-2}\right)}} \sum\limits_{i=1}^\dimx \gamma_{i,\DNmax} 
\end{align}
Let the initial state $\x(0)$ be strictly within the safe set, such that $\classkfun(h(\x(0)))\geq \epsilon$. Then, with probability at least $1-\delta$, the closed-loop system under the policy specified by
\Cref{alg:feasibility_recovering_algorithm_temporal} is safe. Furthermore, the closed-loop stops collecting data after at most $\DNmax$ collected observations. 
    
\end{theorem}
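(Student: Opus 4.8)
\emph{Plan.} I would condition on one high-probability event and then run a switched forward-invariance argument, handling the safe-filter phases and the exploration phases separately before stitching them together. \textbf{Step 1 (uniform confidence envelope).} First I would show that, with probability at least $1-\delta$, the true CBF derivative obeys $LCB_N(\x,\uin)\le\dot h(\x,\uin)\le UCB_N(\x,\uin)$ simultaneously for every data count $N$ and every $(\x,\uin)$. This comes from the standard anytime RKHS/GP concentration inequality applied per output dimension, which gives $|f_i(\x)+(g(\x)\uin)_i-\mu_{i,N}|\le\beta_{i,N}\sigma_{i,N}$ with $\beta_{i,N}$ built from $B_i$, $\gamma_{i,N}$ and $\sigmains$ as in \eqref{eq:betan}; combining the per-dimension bounds by Cauchy--Schwarz with the gradient bound $\|\tfrac{\partial h}{\partial\x}\|\le L_h$ from \Cref{assumption:Lipschitzh} produces exactly the envelope \eqref{eq:hdot_lcb}--\eqref{eq:hdot_ucb}. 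Everything afterwards is deterministic on this event.

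\textbf{Step 2 (safety while the filter is feasible).} Whenever $\policy_{N,\textrm{safe}}$ is applied, the constraint in \eqref{eq:gp_cbf_socp} and Step 1 give $\dot h\ge LCB_N\ge-\classkfun(h)+\tfrac{\epsilon}{2}$. Comparing with the scalar flow $\dot y=-\classkfun(y)+\tfrac{\epsilon}{2}$ (which has $\dot y>0$ wherever $\classkfun(y)<\tfrac{\epsilon}{2}$) via the comparison lemma shows that a safe-filter phase can never drive $\classkfun(h)$ below $\tfrac{\epsilon}{2}$, and increases it whenever it sits below that level; hence all danger originates in the exploration phases. \textbf{Step 3 (finitely many samples).} At a trigger \eqref{eq:triggering_condition} we have $\max_\uin LCB_N=-\classkfun(h)+\tfrac{\epsilon}{2}$, while \Cref{assp:robust_cbf_feasibility} and the UCB-maximizing choice \eqref{eq:gp_ucb} force $UCB_N(\x,\uin^{(N+1)})\ge-\classkfun(h)+\epsilon$; subtracting, the envelope width at the sampled point is at least $\tfrac{\epsilon}{2}$, so $\text{tr}(\mb{\Sigma}_N(\xaug^{(N+1)}))\ge\epsilon^2/(16L_h^2\beta_N^2)$. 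Summing these per-sample variances, invoking the maximal-information-gain bound $\sum_n\text{tr}(\mb{\Sigma}_{n-1}(\xaug^{(n)}))\le\frac{2}{\log(1+\sigmains^{-2})}\sum_i\gamma_{i,N}$, and using monotonicity of $\beta_N$ and $\gamma_{i,N}$, yields $N\le\frac{32\beta_N^2L_h^2}{\epsilon^2\log(1+\sigmains^{-2})}\sum_i\gamma_{i,N}$. By \eqref{eq:Nmaxspecification} the value $N=\DNmax$ is impossible, which proves data collection halts after fewer than $\DNmax$ observations and that feasibility is always eventually recovered.

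\textbf{Step 4 (margin bookkeeping across phases).} Within any single exploration phase, \Cref{assumption:knownboundontimederivative} together with $\|\tfrac{\partial h}{\partial\x}\|\le L_h$ and the Lipschitz constant $L_\classkfun$ of $\classkfun$ give $|\tfrac{d}{dt}\classkfun(h)|\le L_\classkfun L_h\Lsystem$, so $\classkfun(h)$ can fall by at most $L_\classkfun L_h\Lsystem\,\Delta t$ over the phase. Since Step 3 caps the number of phases below $\DNmax$ and Step 2 guarantees the safe-filter phases never push $\classkfun(h)$ downward inside the critical region $\{\classkfun(h)<\tfrac{\epsilon}{2}\}$, the cumulative decrease of $\classkfun(h)$ available to the exploration phases is governed by the product $\DNmax\,L_\classkfun L_h\Lsystem\,\Delta t$. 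Balancing this against the initial margin $\classkfun(h(\x(0)))\ge\epsilon$ is exactly the role of the threshold in \eqref{eq:sampling_frequency_specification}: it certifies $\classkfun(h(\x(t)))>0$, i.e. $\x(t)\in\SafeSet$, for all $t$.

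I expect \textbf{Step 4} to be the main obstacle. One must track $\classkfun(h)$ along an a priori unbounded, feedback-dependent sequence of alternating safe and exploratory intervals, argue that the worst case is a run of consecutive exploration phases, verify that no decrease silently accumulates through the intervening safe-filter phases, and pin down the exact constant (and the precise form of the inequality) relating $\Delta t$, $\DNmax$ and $\epsilon$. A secondary difficulty sits in Step 1, where the envelope must hold \emph{uniformly} over the random trigger times $t_N$; this forces the anytime martingale form of the RKHS bound rather than a fixed-$N$ union bound.
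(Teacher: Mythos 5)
Your plan follows essentially the same route as the paper's proof. Step 1 is \Cref{lemma:chowdhury_multivariate} and \Cref{lemma:safetyifsetisnonempty} (the anytime RKHS confidence envelope combined with $\Vert\partial h/\partial\x\Vert_2\le L_h$); Step 2 is \Cref{lemma:feasible_trajectories_are_epsilon_safe} (safety and monotonicity of $\classkfun(h)$ below the $\epsilon/2$ level under the feasible filter, via the comparison lemma); and Step 3 reproduces \Cref{lemma:srinivasregretboundmatrix} and \Cref{label:strict_feasibility_after_enough_data} exactly, including the key observation that at every trigger the confidence width at the UCB-maximizing input is at least $\epsilon/2$, which via the information-gain bound caps the sample count by the right-hand side of \eqref{eq:Nmaxspecification}. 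Those three steps are correct and complete.

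The gap is where you predicted: Step~4. Two ingredients are missing. First, the relevant budget is not the initial margin $\classkfun(h(\x(0)))\ge\epsilon$ but the gap from the level set $\{\classkfun(h)=\epsilon/2\}$ to the boundary: above that level the feasible filter only guarantees $\dot h\ge-\classkfun(h)+\epsilon/2$, which can be negative, so the safe-filter phases \emph{can} drive $\classkfun(h)$ down to $\epsilon/2$; the paper therefore defines $t_{\epsilon/2}$ as the last crossing of that level before a hypothetical exit and budgets only $\epsilon/2$ of decrease. Second, you must relate the continuous-time crossing times to the discrete trigger times $t_N$; this is \Cref{lemma:lasttimestepwasinfeasible}, which shows the filter is already infeasible at the last trigger preceding any decreasing stretch and that $\classkfun(h)$ there differs from its value at the crossing by at most $L_\classkfun L_h\Lsystem\Delta t$, yielding the additive $2L_\classkfun L_h\Lsystem\Delta t$ correction and the $+1$ in the final count of \Cref{lemma:minimum_collected_data_before_exiting}. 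The paper also runs the assembly in contrapositive form (an exit forces at least $\DNmax$ collected samples, while $\DNmax-1$ samples already restore feasibility everywhere in $\SafeSet$), which sidesteps your need to argue that the worst case is a run of consecutive exploration phases. Finally, your hesitation about ``the precise form of the inequality'' is well founded: the bookkeeping requires $\epsilon/(2L_\classkfun L_h\Lsystem\Delta t)>\DNmax$, i.e.\ an \emph{upper} bound on $\Delta t$, whereas \eqref{eq:sampling_frequency_specification} as printed is a lower bound; the last step of \Cref{lemma:minimum_collected_data_before_exiting} only goes through with the inequality reversed, so taking the stated condition at face value would break the argument.
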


\begin{figure*}[th]
\centering
\begin{subfigure}[b]{0.45\textwidth}
\includegraphics[width = 0.99\columnwidth]{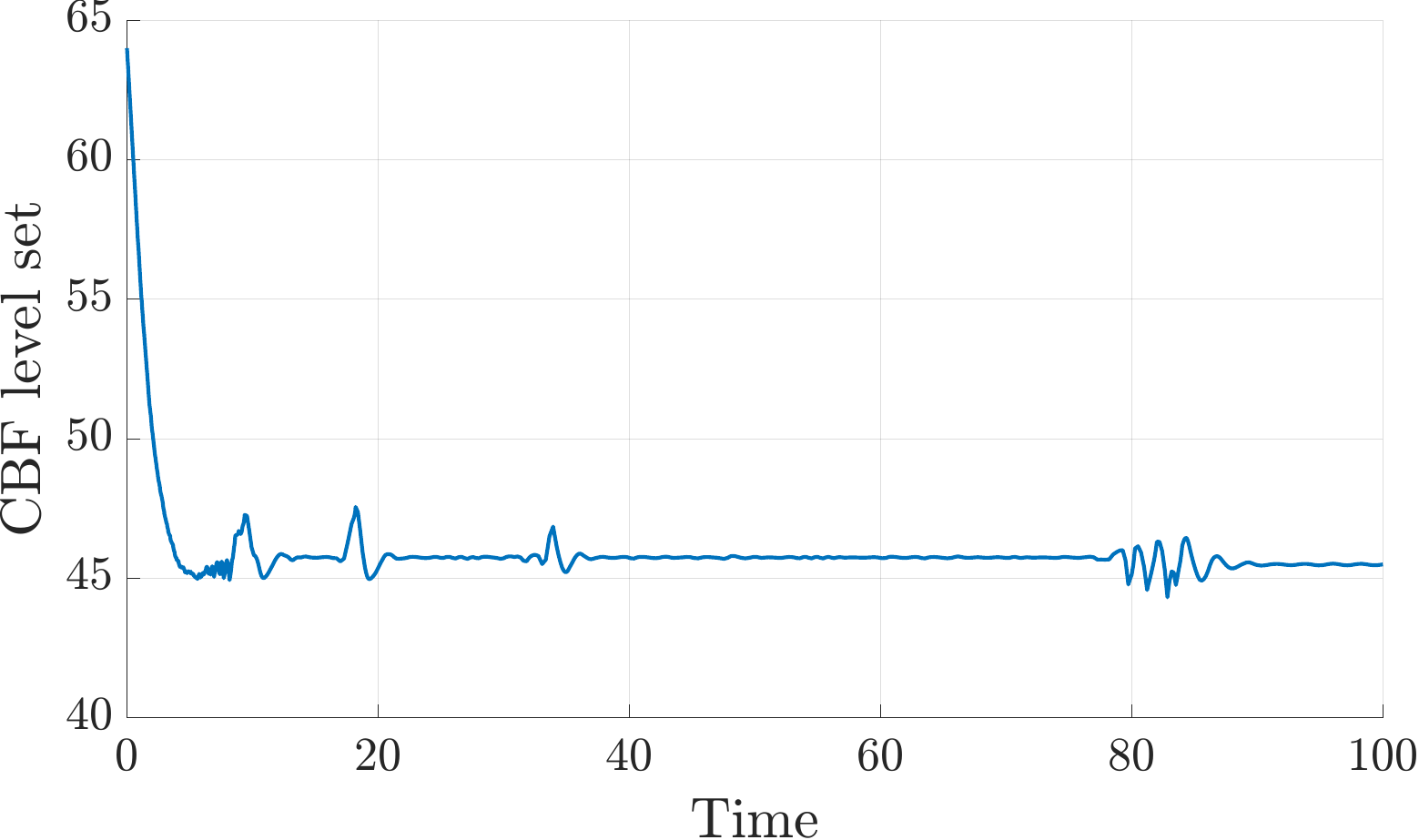}
\caption{Value of control barrier function $h(\x)$ for cruise control example. Since no prior model is available for control, safety is obtained by efficiently learning a model online.} 
\label{fig:simu_run1_nomodel}
\end{subfigure}
\hfill
\begin{subfigure}[b]{0.45\textwidth}
\includegraphics[width = 0.99\columnwidth]{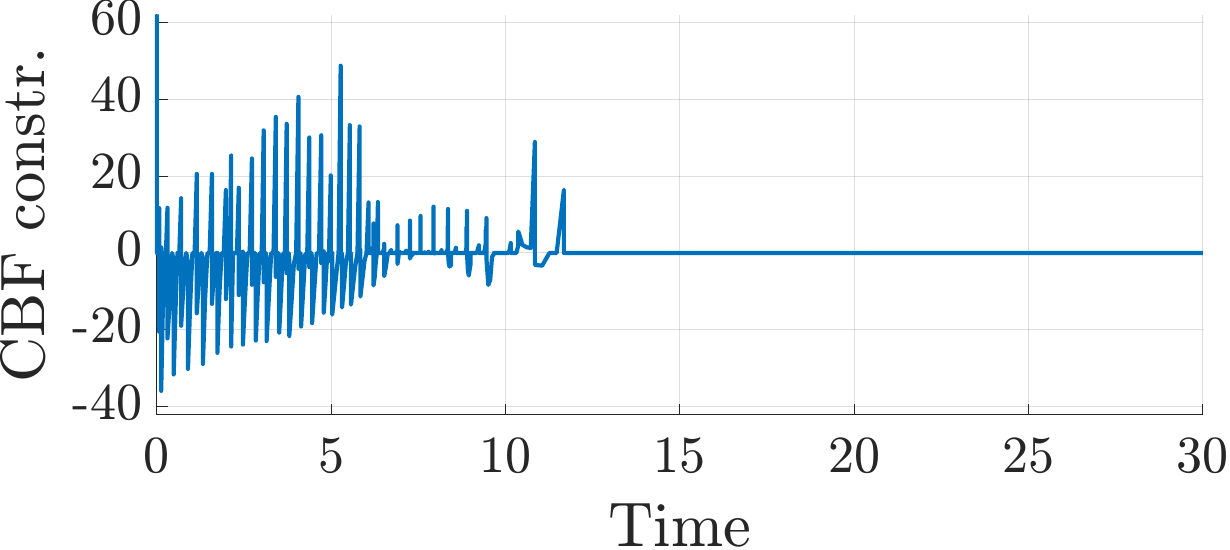}
\caption{Minimum of LCB of CBF time derivative for cruise control example. Spikes are due to training data set updates, leading to decreased model uncertainty. Positive values indicate infeasibility of the robust safety filter \eqref{eq:gp_cbf_socp}.} 
\label{fig:simu_run2_nomodel}
\end{subfigure}
\caption{CBF and LCB for a single cruise control simulation using a sampling time of $\Delta t = 10^{-5}\text{s}$. }
\label{fig:illustration_of_quadrotor_trajectory}
\end{figure*}

The proof of \Cref{thm:mainresult} can be found in \Cref{sect:proofmainresult}. 

Intuitively, \Cref{thm:mainresult} states that if we choose the data collection frequency high enough, we guarantee safety after collecting a finite number of data. The quantity $\beta_{\DNmax}^2\sum_{i=1}^\dimx \gamma_{i,\DNmax}$ is the only part in the right-hand side of \eqref{eq:Nmaxspecification} that depends on $\DNmax$. It is closely related to the regret metric frequently found in bandit literature, which measures the deviation from optimal decisions. This quantity grows sublinearly for most commonly employed kernels, meaning there exists a $\DNmax$ that satisfies \eqref{eq:Nmaxspecification}. In practice, we can approximate by sampling the state and input space. Alternatively, we can obtain a (potentially crude) bound for this term based on the kernel hyperparameters. This is discussed in \Cref{appendix:kerneldependentbound}. 

\section{Discussion and Limitations} 

Although we assume that the system structure \eqref{eq:ol_dyn} is affine in the control input, this is not strictly a requirement for theoretical guarantees and can be relaxed, provided that the optimization problems \eqref{eq:gp_cbf_socp}, \eqref{eq:triggering_condition} and \eqref{eq:gp_ucb} can be solved adequately.

\Cref{thm:mainresult} points at an interaction between various quantities, including the sampling frequency, model uncertainty, measurement noise, CBF violation tolerance, and Lipschitz constants of the system and CBF. In simple terms, \Cref{thm:mainresult} indicates that a higher sampling frequency is likely more beneficial for safety. However, other practical considerations can also improve safety. For instance, we can reduce the Lipschitz constant $\Lsystem$ of the system by cleverly restricting the permissible control input space $\InputSp$, thereby reducing the bound \eqref{eq:Nmaxspecification}.

A practical concern that may arise with control inputs $u$ geared towards exploration is that the corresponding inputs may strain the system and lead to undesirable behavior, e.g., if the input exhibits a high frequency and amplitude. However, computing the input by maximizing the UCB \eqref{eq:gp_ucb} corresponds to choosing the optimistically safest input under uncertainty. Hence, it is reasonable to expect the corresponding input to be acceptable to the system, i.e., not damaging.

The primary focus of this work is the safety of the closed-loop function, and we do not address closed-loop control performance. However, the tools presented here can be straightforwardly extended to address performance, e.g., by designing a data collection scheme that aims to reduce model uncertainty whenever control performance is not satisfactory. Furthermore, our method can be potentially improved by preemptively exploring before infeasibility occurs, increasing the space of permissible control actions. 

Although \Cref{assp:robust_cbf_feasibility} can seem strict, it still offers flexibility, and there are ways to relax it in practice. Typically, only a portion of the safe set is visited during control. Hence, it is enough if the CBF is valid only for the corresponding subset of the state space. Furthermore, we can achieve conservatism by initially restricting the safe set to a region that is easy to control and then iteratively expanding it as more data is collected. This potentially allows us to gradually improve the CBF as more system knowledge allows us to expand the safe set. Alternatively, we can include conservatism in the safety requirement by computing the CBF condition from a collection of (potentially valid) CBFs and taking the worst case.





\section{Numerical Experiments}
\label{sect:numericalvalidatoin}

\begin{figure*}[th]
\centering
\begin{subfigure}[b]{0.31\textwidth}
\includegraphics[trim={300pt 40pt 360pt 100pt},clip,width = 0.99\columnwidth]{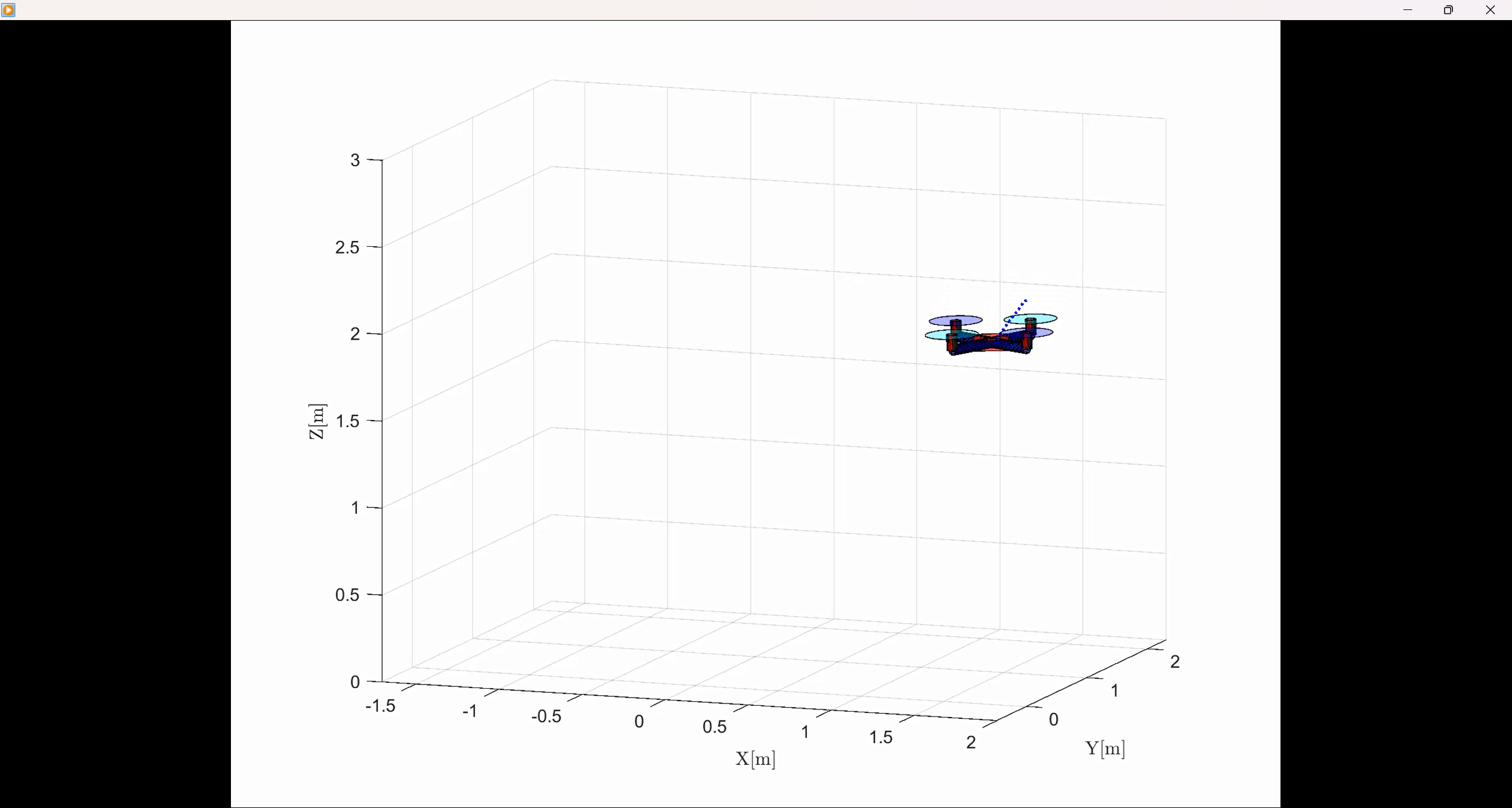}
\caption{The quadrotor starts at a state within the safe region where \eqref{eq:gp_cbf_socp} is infeasible. \\\hspace{\textwidth}
\\\hspace{\textwidth}
\\\hspace{\textwidth}} 
\label{fig:constr_satisf_uniform}
\end{subfigure}
\hfill
\begin{subfigure}[b]{0.31\textwidth}
\includegraphics[trim={300pt 40pt 360pt 100pt},clip,width = 0.99\columnwidth]{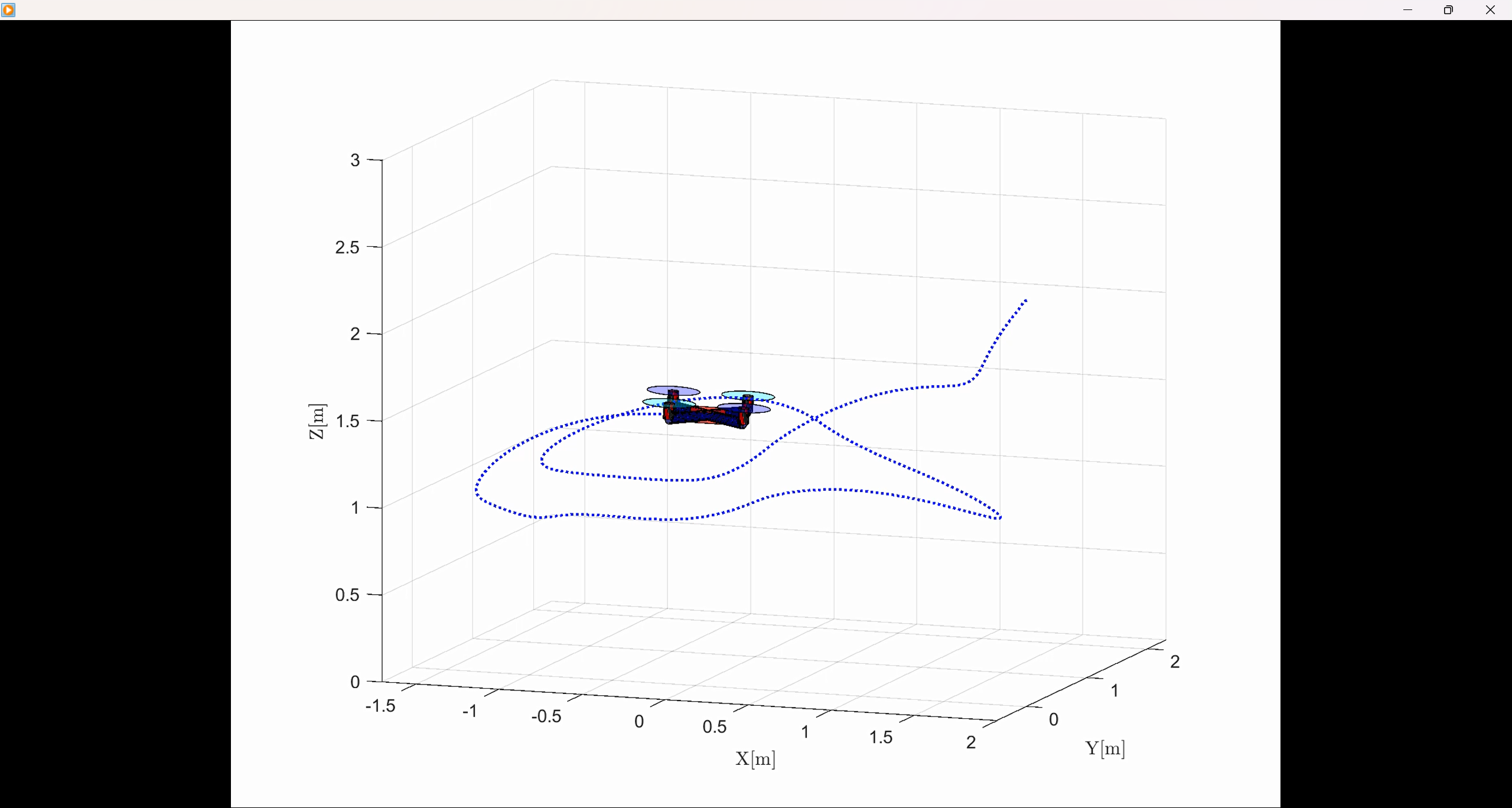}
\caption{Our algorithm computes exploratory control inputs without leaving the safe set. The observations are used to update the model. This phase corresponds roughly to the first two seconds of the simulation.} 
\label{fig:cost_uniform}
\end{subfigure}
\hfill
\begin{subfigure}[b]{0.31\textwidth}
\includegraphics[trim={300pt 40pt 360pt 100pt},clip,width = 0.99\columnwidth]{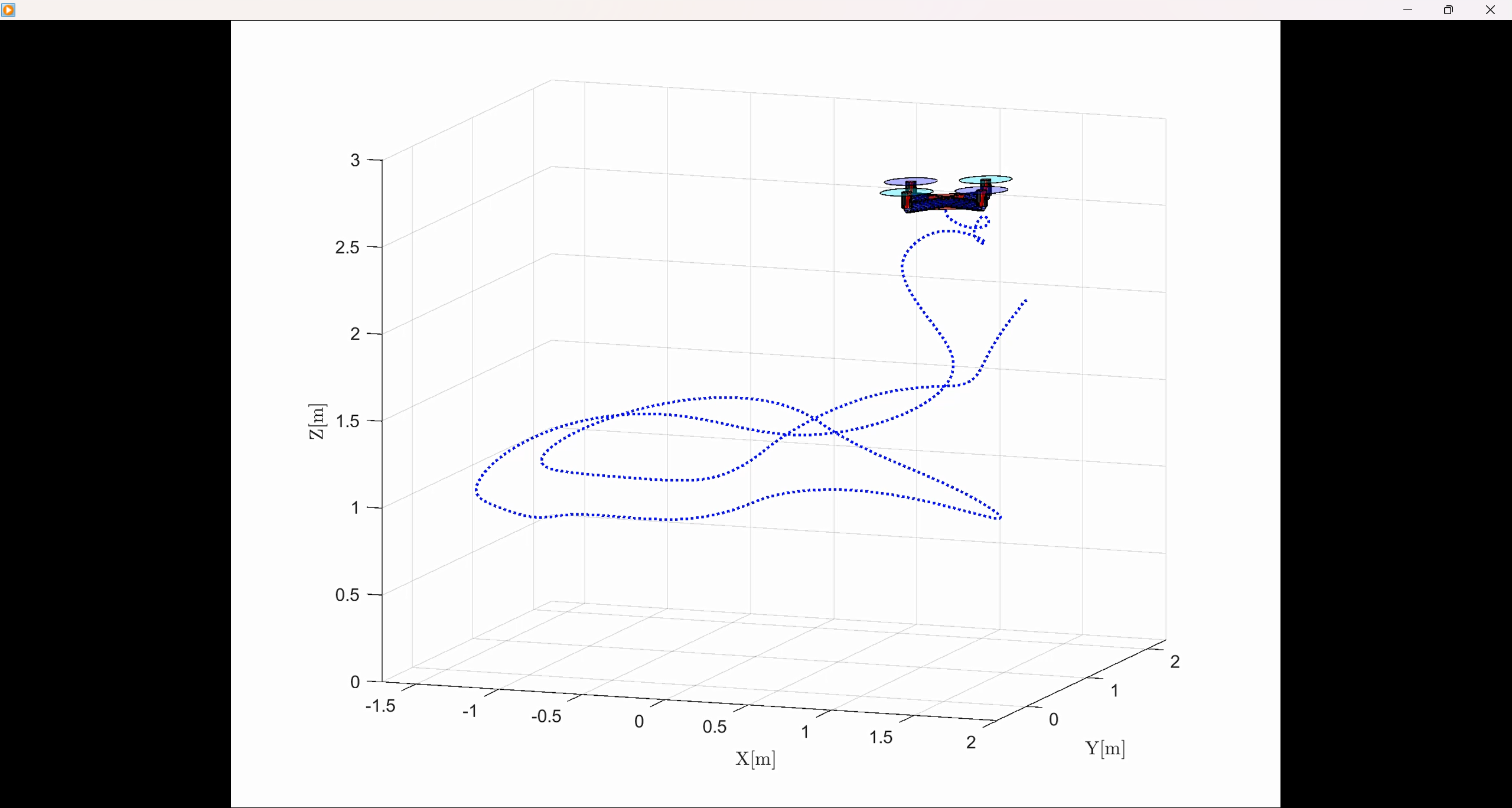}
\caption{After a finite amount of data is collected without exiting the safe set, the uncertainty is small enough to achieve the feasibility of the safety filter \eqref{eq:gp_cbf_socp}. This is expected from \Cref{thm:mainresult}.} 
\label{fig:cost_uniform}
\end{subfigure}
\caption{Visualization of a single trajectory of the quadrotor experiment using a sampling time of $\Delta t = 10^{-5} \text{s}$.}
\label{fig:illustration_of_quadrotor_trajectory}
\end{figure*}

\begin{figure*}[th]
\centering
\begin{subfigure}[b]{0.45\textwidth}
\includegraphics[width = 0.99\columnwidth]{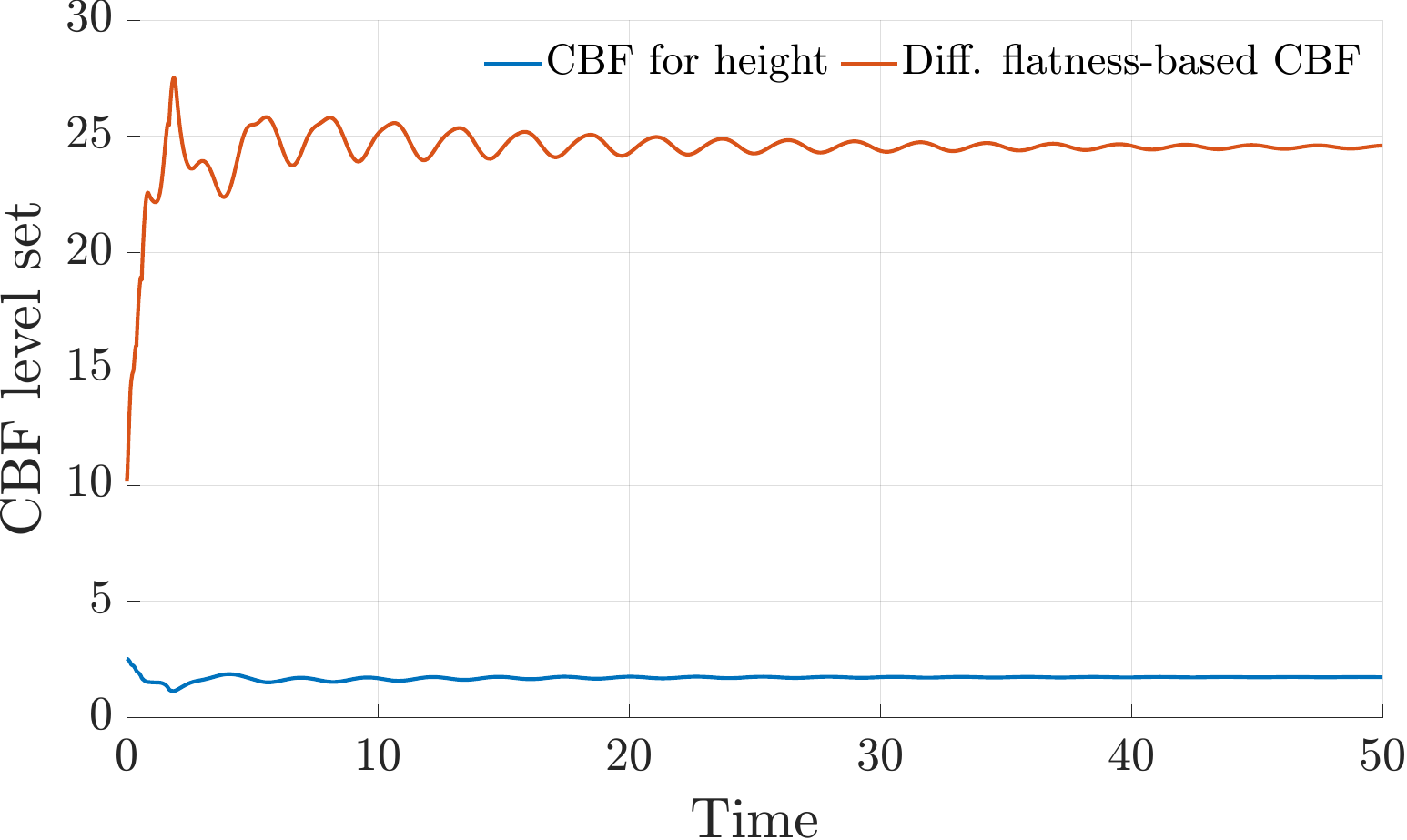}
\caption{Value of control barrier functions for quadrotor example. \\\hspace{\textwidth} } 
\label{fig:sim_run_quadrotor_val}
\end{subfigure}
\hfill
\begin{subfigure}[b]{0.45\textwidth}
\includegraphics[width = 0.99\columnwidth]{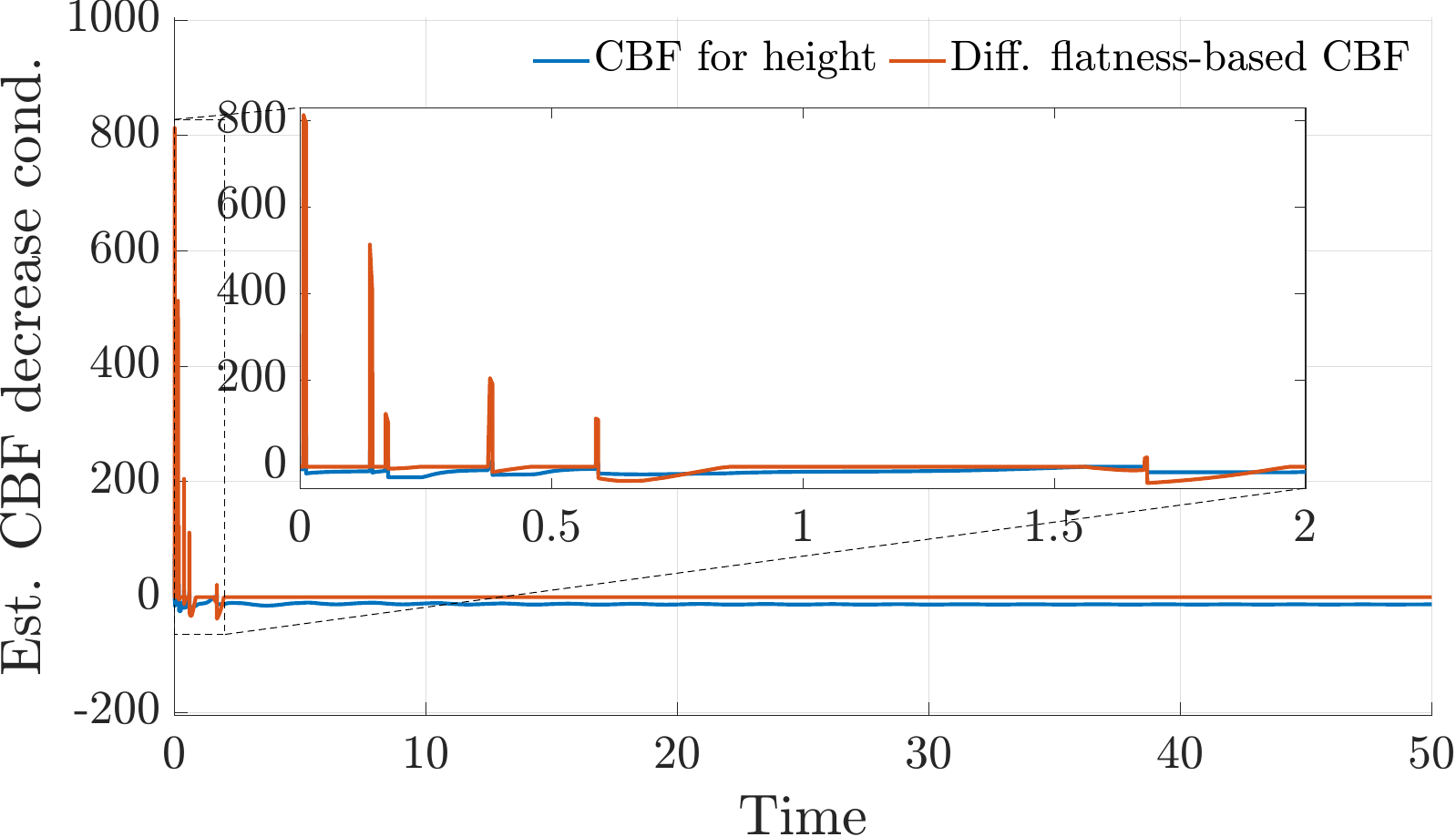}
\caption{Minimum of the LCB of the CBF time derivative for quadrotor example.} 
\label{fig:sim_run_quadrotor_worst_case}
\end{subfigure}
\caption{CBF and LCB for a single quadrotor run using a sampling time of $\Delta t = 10^{-5}\text{s}$. }
\label{fig:illustration_of_quadrotor_trajectory}
\end{figure*}

We now showcase how our approach performs using two numerical simulations: a cruise control system and a quadrotor with ground dynamics. We assume to have either no prior model (cruise control example) or to know only the model component corresponding to the time derivatives (quadrotor). Note that this is insufficient to implement any state-of-the-art approach, as the prior model does not include the influence of the control input. We additionally perform experiments that aim to answer the following questions:
\begin{itemize}
    \item How does our method compare with random exploration during infeasibility?
    \item How does the choice of sampling time $\Delta t$ affect safety?
\end{itemize}

\subsection{Cruise Control}
\label{subsect:cruise_control}

We employ our approach to learn the road vehicle model presented in \citet{castaneda2021pointwise} while applying an adaptive cruise control system. The states are given by $\x= [v \ z]^\top $, where $z$ is the distance between the ego vehicle and the target vehicle in front, and $v$ denotes the ego vehicle speed. Our setup is fully described in \Cref{appendix:cruisecontrol}.

As a control barrier function, we employ $h(\x) = z - T_h v$, where $T_h=1.8$, which aims to maintain a safe distance between the ego vehicle and the vehicle in front. The nominal controller $\policy_\textrm{nom}(\x)$ used for the robust CBF-SOCP \eqref{eq:gp_cbf_socp} is a P-controller $\policy_\textrm{nom} = -10 (v-v_d)$, where $v_d=24$ 
corresponds to the desired velocity. We employ squared-exponential kernels to model $\f$ and $\g$ and assume to have $N=10$ data points at the start of the simulation, which we employ exclusively to learn the kernel hyperparameters. We do so by minimizing the posterior likelihood \citep{Rasmussen2006}. 


We simulate the system for $100$ seconds. The CBF value for a single simulation run using a sampling time of $\Delta t= 10^{-5}\text{s}$ is shown in \Cref{fig:simu_run1_nomodel}. The minimum of the LCB is depicted in \Cref{fig:simu_run2_nomodel}. The CBF value $h(\x)$ is always above zero, meaning the system is safe. This is to be expected from \Cref{thm:mainresult}. As can be seen in \Cref{fig:simu_run2_nomodel}, the minimum of the LCB is frequently positive in the beginning of the simulation. This means that the safety filter \eqref{eq:gp_cbf_socp} is infeasible during these time instances, particularly in the beginning. This leads to a high rate of exploratory inputs, which suddenly decreases model uncertainty, causing the LCB to spike. After approximately $6$ seconds, enough information was collected to recover feasibility, after which a safe input can be obtained without further exploration. 

\begin{figure*}[t]
\centering
\begin{subfigure}[b]{0.49\textwidth}
\includegraphics[trim={150pt 220pt 230pt 200pt},clip,width = 0.6\columnwidth, angle=270
]{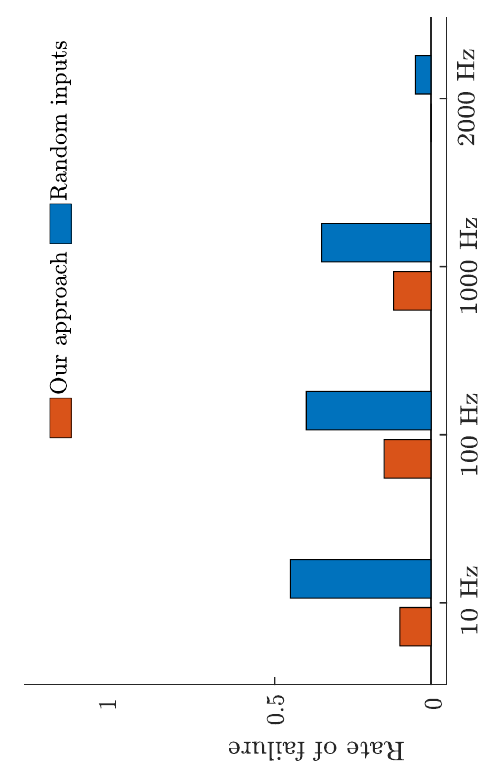}
\caption{Cruise control.} 
\label{fig:constr_satisf_uniform}
\end{subfigure}
\hfill
\begin{subfigure}[b]{0.49\textwidth}
\includegraphics[trim={150pt 220pt 230pt 200pt},clip,width = 0.6\columnwidth, angle=270]{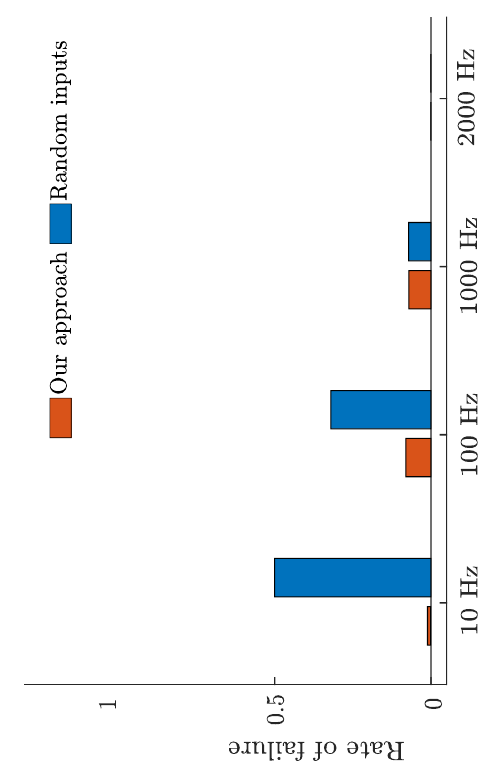}
\caption{Quadrotor.} 
\label{fig:cost_uniform}
\end{subfigure}
\caption{Rate of failure using our approach and random inputs for the cruise control (left) and quadrotor (right) settings at varying sampling frequencies $(\Delta t)^{-1}$.}
\label{fig:failures}
\end{figure*}

\subsection{Quadrotor}

We additionally showcase our approach using a numerical simulation of a quadrotor with ground dynamics. The quadrotor dynamics are described in detail in \Cref{appendix:quadrotor}.

 The quadrotor states are given by $\x = [p, v, q]$, where where $p\in\mathbb{R}^3$ is the global position, $v\in\mathbb{R}^3$ the global velocity, and $q\in\mathbb{R}^4$ is the system orientation in quaternion format. The control inputs $u= [T \ \omega ]^\top$ consist of the thrust acceleration $T$ and the body-frame angle rates $\omega \in  SO(3)$.

In this example, we employ two CBFs. The first is $h(\x) = 10(p_z - T_z v_z)$, where $T_z=0.1$, and is geared toward keeping altitude higher than zero. The second CBF utilizes the differential flatness of the quadrotor to restrict velocities, orientation we add a rotation, and the orientation of the thrust vector. It is fully described in \Cref{appendix:quadrotor}. When computing the exploratory control inputs \eqref{eq:gp_ucb}, we alternate between CBFs. The nominal controller $\policy_\textrm{nom}(\x)$ used for the robust safety filter \eqref{eq:gp_cbf_socp} corresponds to a differentially flat controller, computed as in \citet{faessler2018differentially}, and we consider bounded thrust, with $\vert T\vert \leq 15000$. Similarly to the cruise control setting, we use squared-exponential kernels and assume to have $N=10$ data points at the start of the simulation to learn the kernel hyperparameters.

We simulate the system for $50$ seconds. The CBF value for a single run using a sampling time of $\Delta t = 10^{-5}\text{s}$ can be seen in \Cref{fig:sim_run_quadrotor_val}, the minimum of the LCB is depicted in \Cref{fig:sim_run_quadrotor_worst_case}.\Cref{fig:illustration_of_quadrotor_trajectory} depicts the corresponding quadrotor trajectory. Similarly to the cruise control setting, infeasibility is frequently encountered initially, causing our algorithm to explore. After approximately $2$ seconds, feasibility is recovered, causing the quadrotor to hover in place for the remainder of the simulation. 

\subsection{Comparison with Random Exploratory Control}
\Cref{thm:mainresult} states that it is \textit{sufficient} to collect data by applying \eqref{eq:gp_ucb} to the system in order to guarantee safety. However, other types of control inputs may also satisfy this requirement. In the following, we investigate how our approach performs compared to random exploratory control inputs. More specifically, we apply our method to the system with the following difference: instead of computing exploratory inputs by solving \eqref{eq:gp_ucb}, we sample the control inputs from a uniform distribution on $\mathcal{U}$. We again consider the adaptive cruise control and quadrotor settings and investigate how our approach and the random input-based one perform using various sampling frequencies $\Delta t$. This is especially relevant, as many practical settings do not allow for arbitrarily high sampling frequencies.

We perform $100$ simulations with different initial conditions, uniformly sampled from a region within the safe set. We report how often each method fails, i.e., leads to a positive value for the CBF during the simulation. The average number of failures for both settings is shown in \Cref{fig:failures}. A non-zero failure rate is expected at low sampling frequencies since too little data is collected to learn a model quickly. However, our approach nonetheless performs better than the random control input-based approach. This is because the inputs applied to the system are geared towards recovering the feasibility of the robust safety filter \eqref{eq:gp_cbf_socp}, whereas random inputs are not. We also observe that data efficiency is higher at higher sampling, leading to less collected data. This is because a higher sampling rate means that the elapsed time between data collection and model update is smaller, i.e., the model captures the true system more faithfully immediately after an update. This suggests that, if the maximal permissible sampling frequency is low, then particular effort should be put into updating the model as fast as possible.

\section{Related Work}
\label{section:relatedwork}
\subsection{Control Barrier Functions}
CBFs have been studied extensively in settings with perfectly known dynamics \citep{bansal2017hamilton,agrawal2017discrete,ames_control_2017,wabersich_linear_2018, zeng2021safety}.
However, the setting where a known CBF is used to guide data collection when dynamics are unknown is an ongoing research question that has recently garnered increased attention. In \cite{osti_10180295,pmlr-v120-taylor20a}, the authors employ a neural network to learn the residual dynamics in a reinforcement learning setting. The safety of the learned policy is enforced by including a CBF constraint during training. An event-triggered approach for improving the learned dynamics model is proposed in \cite{lederer2024safe}, where a CBF and the system structure are leveraged to guarantee safety. 

\subsection{Safe Exploration with Gaussian Processes}
Gaussian processes have been used extensively in the context of safe control and exploration. The work of \cite{SUKHIJA2023103922} leverages contextual bandits to safely tune a parametric controller. By formulating an upper confidence bound for the safety function, the method switches to safe parameters whenever the boundary of the safe set is reached. The works of \citet{pmlr-v37-sui15, turchetta2016safe, pmlr-v119-wachi20a, prajapat2022near} consider exploration with finite state spaces, known dynamics, and unknown functions specifying safe state-action pairs. A lower confidence bound of the safety constraints ensures safe actions, whereas an upper confidence bound for the reward drives exploration. A significant difference between our method and those described above is that they explore safe states to expand the safe set and maximize return, whereas exploration in our approach is driven by the need to ensure safety. The methods of \cite{koller2018learning,hewing2019cautious} use a GP to formulate safety constraints over a finite horizon for a model predictive control algorithm.



\section{Conclusion}
\label{section:conclusion}

We have presented a learning-based control approach for exploring on the fly to guarantee safety. Using a Gaussian process model, we have formulated a robust safety filter that guarantees safe inputs whenever feasible. Whenever infeasibility occurs we revert to a bandit exploration algorithm that is geared toward ensuring that feasibility is recovered before exiting the safe set. We show that if the sampling rate is chosen high enough, we can provably ensure safety with high probability. In future, we aim to apply the proposed approach to real-life and other complex systems. 

\section*{Acknowledgements}
This work is supported by the DAAD programme Kon-
rad Zuse Schools of Excellence in Artificial Intelligence (relAI), sponsored by the German Federal Ministry of Education and Research and by the European Research Council (ERC) Consolidator Grant “Safe data-driven control for human-centric systems (CO-MAN)” under grant agreement number 864686.

\section*{Impact Statement}

This paper presents work toward advancing the safety of 
Machine Learning-based methods. Safety plays a crucial role in developing Machine Learning techniques, as guaranteeing safety greatly enhances the applicability of existing methods. We feel that developing and further improving safe learning-based techniques such as the one presented in this paper will have a significant societal impact, as the spectrum of applications is substantial, including healthcare, autonomous driving, human-machine interaction, and various other robotics fields. 

\bibliography{references}

\bibliographystyle{icml2025}

\newpage
\appendix
\onecolumn

\section{Full Description of Numerical Experiments}

\subsection{Cruise Control}
\label{appendix:cruisecontrol}

In the following, we omit physical dimensions when describing the system model. The state space model of the cruise control is given by $\dot{x}=\f(\x)+\g(\x)\uin$, with state-dependent functions
\begin{align}
\label{eq:state_space_model_cruise_ctrl}
    f(\x) = 
    \begin{bmatrix}
        -\frac{1}{m}(\zeta_0 + \zeta_1 v + \zeta_2 v^2 ) \\
        v_0 - v
    \end{bmatrix}, \quad g(\x) = \begin{bmatrix}
        0 \\
        \frac{1}{m}
    \end{bmatrix} 
\end{align}
and state $\x= [v \ z]^\top $, where $z$ denotes the distance between the ego vehicle and the target vehicle in front, $v$ denotes the ego vehicle speed, $m = 1650$ its mass, and $\zeta_0=0.2$, $\zeta_1=10$, $\zeta_2 =0.5$ are parameters that specify the rolling resistance. 

\subsection{Quadrotor}
\label{appendix:quadrotor}

The quadrotor dynamics are specified by the functions
\begin{subequations}
\label{eq:state_space_model_quadrotor}
\begin{align}
    \dot{p} &= v, \quad & \dot{v} &= g_{\textup{gr}} e_z  + \zeta(p_z) R e_z T , \\ \dot{R} &= R [\omega]_\times 
    \end{align}
\end{subequations}
where $p\in\mathbb{R}^3$ is the global position, $v\in\mathbb{R}^3$ the global velocity, and $R\in \text{SO}(3)$ the system orientation. The parameter $g_\textup{gr} = 9.81$ denotes gravity, $[\; \cdot\; ]_\times: \mathbb{R}^3 \rightarrow \R^{3\times3}$ is the skew-symmetric mapping, and $e_z = [0 \ 0 \ 1]^\top$ is the unit-$z$ vector. The function $\zeta:\mathbb{R}_+ \rightarrow [0,1]$ models ground effects and takes the quadrotor height $p_z$ as an input variable. It is computed as \citet{danjun2015autonomous}
\begin{align}
    \zeta(p_z) = 1-\rho\left(\frac{r_{\text{rot}}}{4p_z}\right)^2,
\end{align}
where $\rho =  5$ and $r_{\text{rot}} = 0.09 $ is the rotor radius.

The first CBF is $h(\x) = 10(p_z - T_z v_z)$, where $T_z=0.1$. It is geared toward keeping altitude higher than zero. The second CBF utilizes the differential flatness of the quadrotor to synthesize a viable safe set. In particular, we are concerned with restricting the system to safe positions defined as the 0-superlevel set of $h_p(p) = r^2 - \Vert p \Vert^2$. We extend $h_p$ to include velocities as $h_e(p,v) = \dot{h}_p(p,v) + \classkfun h_p(p) $ for some $\classkfun >0 $ to produce a relative degree-1 CBF for a double integrator system as in \citet{nguyen2016exponential}. To include orientation we add a rotation term to produce a CBF for the drone, $h(p,v,R) = h_e(p,v) - \lambda (1 - \frac{1}{2r}\frac{\partial h}{\partial p} R e_z) $, with some $\lambda \in (0, r^2/2)$ which shrinks the safe set to ensure that the thrust vector is pointing inwards whenever $h_e(p,v) = 0$.

\section{Proof of Theorem \ref{thm:mainresult}}
\label{sect:proofmainresult}
 We prove \Cref{thm:mainresult} in two steps. First, we show that the control inputs are well-behaved and the closed-loop system is safe if the robust CBF-SOCP \eqref{eq:gp_cbf_socp} is strictly feasible. Afterward, we show that strict feasibility is always recovered before exiting the safe set when using our exploration approach.

 We begin by leveraging Assumptions \ref{assumption:noise_cbf} and \ref{assumption:finiteRKHSnorm_CBFS} to obtain a bound on the model error, which is crucial to establish safety.


\begin{lemma}
\label{lemma:chowdhury_multivariate}
Let Assumptions \ref{assumption:noise_cbf} and \ref{assumption:finiteRKHSnorm_CBFS} hold and let \[\beta_N \triangleq \max_{i\leq n} \left(B_i + \sigmains\sqrt{2\left(\gamma_{i,N} + 1 + \log{(\dimx\delta^{-1})}\right) }\right).\]
Then, with probability at least $1-\delta$,
\begin{align*}
\Vert f(\x^*) + g(\x^*)u^* - \bs{\mu}(\x^*, \uin^*)\Vert_2 \leq \beta_N \sqrt{\text{tr}\left(\mb{\Sigma}_N(\x^*,\uin^*)\right)}
\end{align*}
holds for all $\x\in \X$, $u \in \U$ and all $N\in\mathbb{N}$.
\end{lemma}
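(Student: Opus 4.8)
\section*{Proof Proposal for \texorpdfstring{\Cref{lemma:chowdhury_multivariate}}{Lemma}}

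The plan is to reduce the multivariate statement to $n$ separate applications of the standard single-output RKHS concentration bound, and then recombine coordinatewise into the Euclidean norm. First I would observe that the composite kernel \eqref{eq:compositekernel} is constructed precisely so that, for each fixed coordinate $i$, the scalar map $\xaug = (\x^\top,\uin^\top)^\top \mapsto f_i(\x) + \sum_{j=1}^m g_{i,j}(\x)u_j$ is a single function on the augmented space $\X \times \U$. By \Cref{assumption:finiteRKHSnorm_CBFS} this function lies in the RKHS of $k_i$ with norm at most $B_i$, and the posterior quantities $\mu_{i,N}$ and $\sigma^2_{i,N}$ from \eqref{eq:multivariategp} are exactly the ordinary scalar GP posterior mean and variance for regressing this function from $\mathcal{D}_{i,N}$ under $\sigmains$-sub-Gaussian noise (\Cref{assumption:noise_cbf}). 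This identification is the conceptual crux: once it is made, each coordinate is a routine scalar kernelized regression problem.

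Next I would invoke the known self-normalized RKHS confidence bound (the Chowdhury--Gopalan / Abbasi-Yadkori martingale bound). Applied to coordinate $i$ at confidence level $1-\delta/n$, it gives that, with probability at least $1-\delta/n$,
\[ \left| f_i(\x^*) + \textstyle\sum_{j} g_{i,j}(\x^*)u_j^* - \mu_{i,N}(\xaug^*) \right| \leq \beta_{i,N}\, \sigma_{i,N}(\xaug^*) \]
holds simultaneously for all $\xaug^* = (\x^*,\uin^*)$ and all $N\in\N$, where $\beta_{i,N} = B_i + \sigmains\sqrt{2(\gamma_{i,N} + 1 + \log(n\delta^{-1}))}$. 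The $\log(n\delta^{-1})$ term is exactly what results from rescaling the per-coordinate failure probability from $\delta$ to $\delta/n$, and the uniformity over all $N$ is inherited from the anytime-valid (martingale) nature of the bound, which is what lets the conclusion hold for every $N$ at once.

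I would then take a union bound over the $n$ coordinates so that all $n$ scalar inequalities hold simultaneously with probability at least $1-\delta$. Upper-bounding $\beta_{i,N} \leq \beta_N \triangleq \max_{i\leq n}\beta_{i,N}$ uniformly in $i$, the Euclidean norm assembles from the squared coordinate bounds:
\[ \norm{\f(\x^*) + \g(\x^*)u^* - \bs{\mu}_N(\xaug^*)}_2^2 = \sum_{i=1}^n r_i^2 \leq \sum_{i=1}^n \beta_{i,N}^2\, \sigma^2_{i,N}(\xaug^*) \leq \beta_N^2 \sum_{i=1}^n \sigma^2_{i,N}(\xaug^*) = \beta_N^2\, \mathrm{tr}\!\left(\mb{\Sigma}_N(\xaug^*)\right), \]
where $r_i \triangleq f_i(\x^*) + \sum_{j} g_{i,j}(\x^*)u_j^* - \mu_{i,N}(\xaug^*)$ and the final equality uses that $\mb{\Sigma}_N$ is the diagonal posterior covariance with entries $\sigma^2_{i,N}$. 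Taking square roots yields the claim.

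The main obstacle is the first step rather than the recombination: one must verify carefully that the composite-kernel posterior genuinely coincides with the scalar posterior for the combined drift-plus-input function, so that the off-the-shelf scalar bound applies verbatim with RKHS-norm bound $B_i$. The union bound and the $\ell_2$ assembly are then routine; the only point to watch is that a common $\beta_N$ (the coordinatewise maximum of the $\beta_{i,N}$) is used, so that a single confidence scaling is uniform across all dimensions.
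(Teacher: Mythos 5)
Your proposal is correct and follows essentially the same route as the paper: the paper simply cites \citet[Lemma 1]{Capone2019BacksteppingFP}, which packages the per-coordinate Chowdhury--Gopalan-type RKHS confidence bound together with the union bound over the $n$ output dimensions (hence the $\log(n\delta^{-1})$ term), and then performs the identical coordinatewise squaring, bounding $\beta_{i,N}\leq\beta_N$, and summation into $\mathrm{tr}(\mb{\Sigma}_N)$. You have merely unpacked the cited lemma into its standard proof, so there is no substantive difference.
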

\begin{proof}
By \citet[Lemma 1]{Capone2019BacksteppingFP}, 
with probability at least $1-\delta$,
\begin{align*}
&\bigg \vert f_i(\x^*) + \sum_{j=1}^m g_{i,j}(\x^*)u_j^* - \mu_{i,N}(\x^*,\uin^*) \bigg\vert \leq \left( B_i + \sigmains\sqrt{2\left(\gamma_{i,N} + 1 + \log{(\dimx\delta^{-1})}\right) }\right)
\sigma_{i,N}(\x^*,\uin^*) 
\end{align*}
holds for all $i=1,...,n$, all $\x\in \X$, $u \in \U$, and all $N\in\mathbb{N}$.
We then have
    \begin{align*}
        \Vert f(\x^*) + g(\x^*)u^* - \bs{\mu}(\x^*, \uin^*)\Vert_2^2 
        \leq &\sum_{i=1}^n\bigg \vert f_i(\x^*) + \sum_{j=1}^m g_{i,j}(\x^*)u_j^* - \mu_{i,N}(\xaug^*) \bigg\vert^2 \\ \leq& \sum_{i=1}^n \beta_{i,N}^2
\sigma_{i,N}^2(\xaug^*) \\ &\leq \sum_{i=1}^n \beta_{N}^2
\sigma_{i,N}^2(\xaug^*) \\
=&\beta_{N}^2 \text{tr}\left(\mb{\Sigma}_N(\x^*,\uin^*)\right).
    \end{align*}
\end{proof}

Lemma \ref{lemma:chowdhury_multivariate} allows us to bound the GP model error at any given point $\xaug^*$ given any data with high probability. This plays a crucial role in establishing robust system safety in the face of model uncertainty. 

\subsection*{Safety under Strict Feasibility}
We first demonstrate that when the robust safety filter \eqref{eq:gp_cbf_socp} is strictly feasible, the control law $\policy_{N, \textrm{safe}}$ ensures the system remains strictly within the safe set with high probability. Specifically, the control barrier function value of the closed-loop system increases over time whenever $\classkfun(h(\x)) < \frac{\epsilon}{2}$. To prove this statement formally, we first present some preliminary results.

\begin{lemma}
\label{lemma:local_lipschitz_continuity}
    Let the robust safety filter \eqref{eq:gp_cbf_socp} be strictly feasible for some $\x \in \StatSp$. Then the policy $\policy_{N, \textrm{safe}}$ specified by the safety filter \eqref{eq:gp_cbf_socp} is Lipschitz continuous in a neighborhood of $\x$.
\end{lemma}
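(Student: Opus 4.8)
The plan is to recognize the solution map $\x \mapsto \policy_{N, \textrm{safe}}(\x)$ as the solution of a parametric, uniformly strongly convex program and to deduce its local Lipschitz continuity from standard parametric-optimization sensitivity results, once the appropriate constraint qualification has been extracted from strict feasibility. First I would note that, since the minimizer is unaffected by squaring a nonnegative objective, I may replace $\Vert u - \policy_\textrm{nom}(\x)\Vert_2$ by $\tfrac{1}{2}\Vert u - \policy_\textrm{nom}(\x)\Vert_2^2$, which is strongly convex in $u$ with modulus one and depends on $\x$ only through the shift $\policy_\textrm{nom}(\x)$. Writing the constraint as $c(\x,u) \triangleq LCB_N(\x,u) + \classkfun(h(\x)) - \tfrac{\epsilon}{2} \geq 0$, I would then verify that the feasible set $F(\x) \triangleq \{u \in \U : c(\x,u) \geq 0\}$ is closed and convex: $\tfrac{\partial h}{\partial \x}(\x)\bs{\mu}_N(\x,u)$ is affine in $u$, while $\sqrt{\text{tr}(\mb{\Sigma}_N(\x,u))} = \sqrt{\sum_i \sigma_{i,N}^2(\x,u)}$ is, for fixed $\x$, the square root of a positive-definite quadratic in $u$, hence (after completing the square) a smooth convex function of $u$. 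Therefore $c(\x,\cdot)$ is concave, and $\policy_{N, \textrm{safe}}(\x)$ is the unique Euclidean projection of $\policy_\textrm{nom}(\x)$ onto $F(\x)$, so it is single-valued.

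Next I would transfer strict feasibility into a constraint qualification that holds uniformly near $\x$. Strict feasibility at $\x$ means there exists $\bar u \in \U$ with $c(\x,\bar u) > 0$, i.e. a Slater point. Since $c$ is jointly continuous in $(\x,u)$ and $\policy_\textrm{nom}$ is continuous, this strict inequality persists for all states in some neighborhood $U$ of $\x$; hence the program is solvable with a unique minimizer throughout $U$. Because $\sum_i \sigma_{i,N}^2(\x,u) > 0$ everywhere, the square-root term never reaches its nonsmooth point, so $c(\x,\cdot)$ is $C^1$ in $u$, and Slater's condition for the convex constraint implies the Mangasarian--Fromovitz constraint qualification at the solution; in particular $\nabla_u c \neq 0$ whenever the CBF constraint is active.

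Finally I would invoke a Lipschitzian stability theorem for parametric convex programs. The objective is uniformly strongly convex in $u$; the constraint data $c(\x,u)$ and the shift $\policy_\textrm{nom}(\x)$ are locally Lipschitz in $\x$ (using \Cref{assumption:locally_lipschitz_controller} together with the local Lipschitz dependence of $\tfrac{\partial h}{\partial \x}$, $\bs{\mu}_N$, $\mb{\Sigma}_N$, and $\classkfun \circ h$ on $\x$, which they inherit from the smoothness of $h$ and of the kernels) and $C^1$ in $u$; and MFCQ holds on $U$. Under these conditions Robinson's strong-regularity / standard sensitivity results yield that the unique minimizer varies locally Lipschitz in $\x$ on $U$; this is precisely the argument carried out for CBF second-order cone programs in \citet{castaneda2022probabilistic}, which I would cite. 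The hard part will not be the convex-analytic machinery itself but the nonsmoothness that a second-order cone constraint could introduce, together with guaranteeing a nonvanishing active-constraint gradient. Both are resolved here by the positive definiteness of the variance quadratics $\sigma_{i,N}^2(\x,\cdot)$, which keeps $\sqrt{\text{tr}(\mb{\Sigma}_N)}$ smooth and makes the CBF constraint a genuine, curved second-order cone constraint, so that the Slater point delivers the required qualification.
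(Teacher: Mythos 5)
Your proposal is correct and follows essentially the same route as the paper, whose entire proof of this lemma is a deferral to \citet[Lemma 6]{castaneda2022probabilistic}: you reconstruct precisely that argument (strongly convex parametric SOCP, Slater point from strict feasibility, smoothness of the variance-induced cone constraint, and standard Lipschitzian sensitivity results) and cite the same source. The only detail left implicit is the interaction with the compact input set $\U$ in the constraint qualification, but that is likewise absorbed into the cited lemma in the paper's version.
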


\begin{proof}
    The proof is identical to that of \citet[Lemma 6]{castaneda2022probabilistic}.
\end{proof}

\Cref{lemma:local_lipschitz_continuity} is important to guarantee that the trajectory resulting from applying $\policy_{N, \textrm{safe}}$ is unique and well-behaved, allowing us to establish safety.

\begin{lemma}
\label{lemma:safetyifsetisnonempty}
Let \Cref{assumption:knownboundontimederivative,assumption:Lipschitzh,assp:robust_cbf_feasibility,assumption:noise_cbf,assumption:finiteRKHSnorm_CBFS} hold. Then, with probability at least $1-\delta$, the following holds for all $\x\in \mathcal{X}$, all $u \in \mathcal{U}$, and all $N\in \mathbb{N}$: 
\begin{align}
\label{eq:lcb_inequality}
    LCB_N(\x, \uin) \leq \dot{h}(\x, u)\leq UCB_N(\x, \uin) .
\end{align}
\end{lemma}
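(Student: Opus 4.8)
The statement to prove is \Cref{lemma:safetyifsetisnonempty}, which establishes that the true CBF time derivative $\dot{h}(\x,u)$ is sandwiched between the LCB and UCB quantities defined in \eqref{eq:hdot_lcb} and \eqref{eq:hdot_ucb}, with high probability, uniformly over all states, inputs, and dataset sizes.

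My plan is to connect the definition of $\dot{h}$ to the model-error bound already established in \Cref{lemma:chowdhury_multivariate}. First I would write out the exact expression for the true time derivative using the affine dynamics \eqref{eq:ol_dyn}:
\begin{align*}
\dot{h}(\x,u) = \frac{\partial h}{\partial \x}(\x)\left(f(\x)+g(\x)u\right),
\end{align*}
and compare it against the modeled counterpart $\frac{\partial h}{\partial \x}(\x)\bs{\mu}_N(\x,u)$, which is the term appearing in both the LCB and UCB. The difference between these two is exactly $\frac{\partial h}{\partial \x}(\x)\big(f(\x)+g(\x)u - \bs{\mu}_N(\x,u)\big)$, so everything reduces to bounding this inner product.

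The key step is to apply the Cauchy–Schwarz inequality to this inner product, yielding
\begin{align*}
\left\vert \frac{\partial h}{\partial \x}(\x)\big(f(\x)+g(\x)u - \bs{\mu}_N(\x,u)\big)\right\vert \leq \left\Vert \frac{\partial h}{\partial \x}(\x)\right\Vert_2 \left\Vert f(\x)+g(\x)u - \bs{\mu}_N(\x,u)\right\Vert_2.
\end{align*}
I would then bound the gradient norm $\Vert \frac{\partial h}{\partial \x}(\x)\Vert_2$ by the global Lipschitz constant $L_h$ from \Cref{assumption:Lipschitzh} (since a Lipschitz continuous differentiable function has gradient norm bounded by its Lipschitz constant), and bound the second factor directly by $\beta_N\sqrt{\text{tr}(\mb{\Sigma}_N(\x,u))}$ using \Cref{lemma:chowdhury_multivariate}. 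Multiplying these gives that the deviation of $\dot{h}(\x,u)$ from its modeled value is at most $L_h\beta_N\sqrt{\text{tr}(\mb{\Sigma}_N(\x,u))}$, which is precisely the symmetric half-width appearing in \eqref{eq:hdot_lcb} and \eqref{eq:hdot_ucb}. Rearranging the resulting two-sided inequality produces \eqref{eq:lcb_inequality}.

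The main subtlety — rather than obstacle — is ensuring the probabilistic statement is handled correctly: the $1-\delta$ event and the uniformity over all $\x$, $u$, and $N$ are inherited verbatim from \Cref{lemma:chowdhury_multivariate}, since the Cauchy–Schwarz and Lipschitz bounding steps are deterministic and hold pointwise on that same high-probability event. The only genuine care needed is the gradient-norm bound: I must invoke that $h$ being $L_h$-Lipschitz (Assumption \ref{assumption:Lipschitzh}) together with continuous differentiability implies $\Vert \frac{\partial h}{\partial \x}(\x)\Vert_2 \leq L_h$ for all $\x$, which is what matches the $L_h$ factor baked into the LCB/UCB definitions. With that in place the result follows immediately.
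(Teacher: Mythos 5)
Your proposal is correct and follows essentially the same route as the paper's own proof: decompose $\dot{h}(\x,u)$ into the modeled term plus the error $\frac{\partial h}{\partial \x}(\x)\left(f(\x)+g(\x)u-\bs{\mu}_N(\x,u)\right)$, apply Cauchy--Schwarz, bound the gradient norm by $L_h$ via \Cref{assumption:Lipschitzh}, and invoke \Cref{lemma:chowdhury_multivariate} for the model-error factor, with the $1-\delta$ event and uniformity over $\x$, $u$, $N$ inherited from that lemma. The only cosmetic difference is that you treat both bounds symmetrically via an absolute value while the paper writes out only the lower bound and notes the upper bound is identical.
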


\begin{proof}
We show the result only for the lower bound $LCB_N$ since the argument is identical for the upper bound $UCB_N$. By \Cref{lemma:chowdhury_multivariate}, with probability at least $1-\delta$,
    \begin{align*}
         \dot{h}(\x, \uin) =& \frac{\partial h}{\partial \x}(\x)\left(f(\x) + \g(\x)\uin\right) \\
         = &\frac{\partial h}{\partial \x}(\x)\left(\bs{\mu}_N(\x, \uin)\right) + \frac{\partial h}{\partial \x}(\x)\left(f(\x) + \g(\x)\uin - \bs{\mu}_N(\x, \uin)\right)\\
        \geq &  \frac{\partial h}{\partial \x}(\x)\left(\bs{\mu}_N(\x, \uin)\right)  - \left\Vert\frac{\partial h}{\partial \x}(\x)\right \Vert_2\left\Vert\left(f(\x) + \g(\x)\uin - \bs{\mu}_N(\x, \uin)\right)\right\Vert_2 \\
       \geq &  \frac{\partial h}{\partial \x}(\x)\left(\bs{\mu}_N(\x, \uin)\right)  - \left\Vert \frac{\partial h }{\partial \x}(\x) \right\Vert_2 \beta_N \sqrt{\text{tr}\left(\mb{\Sigma}_N(\x, \uin)\right)} + \frac{\epsilon}{2}\\
       \geq &  \frac{\partial h}{\partial \x}(\x)\left(\bs{\mu}_N(\x, \uin)\right) - L_h \beta_N \sqrt{\text{tr}\left(\mb{\Sigma}_N(\x, \uin)\right)} \\
        = & LCB_N(\x, \uin). 
    \end{align*}
    holds for all $\x\in\mathcal{X}$, all $u \in \mathcal{U}$, and all $N\in \mathbb{N}$. 
\end{proof}

We additionally require the following result, which is a direct consequence of a classical result on CBFs \citep[Theorem 1]{ames_control_2017}.

\begin{lemma}
\label{lemma:ames}
For some $\tau_{\text{max}}\in \mathbb{R}\cup \infty$, let $\tilde{\x}(t)$, $t\in [0, \tau_{\text{max}})$, be the unique and well-defined solution to the differential equation
\begin{align}
\label{eq:diff_eq_ames}
    \dot{\tilde{\x}} = \tilde{\f}(\tilde{\x}), \quad \tilde{\x}\in \StatSp, \quad \tilde{\x}(0)=\tilde{\x}_0
\end{align}
where $\tilde{\f}$ is Lipschitz continuous around a neighborhood of $\tilde{\x}(t)$ for all $t\in [0, \tau_{\text{max}})$. If $h(\tilde{\x}(0)) >  0$ and
\begin{align}
\label{eq:cbf_eq_ames}
     \frac{\partial h}{\partial \tilde{\x}}(\tilde{\x}(t))\tilde{\f}(\tilde{\x}(t)) \geq -\classkfun(h(\tilde{\x}(t))) + \frac{\epsilon}{2}
\end{align}
holds for all $t\in[0, \tau_{\max})$, then $\classkfun(h(\tilde{\x}(t))) >  0$ for all $t\in[0, \tau_{\max})$.
\end{lemma}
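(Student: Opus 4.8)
The plan is to reduce the lemma to a one-dimensional differential inequality for the barrier value along the trajectory, and then argue by contradiction that this value can never reach zero.

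First I would set $\eta(t) \triangleq h(\tilde{\x}(t))$. Because $h$ is continuously differentiable and $\tilde{\x}(\cdot)$ is the unique solution of the Lipschitz ODE \eqref{eq:diff_eq_ames} (hence $C^1$ with $\dot{\tilde{\x}}(t) = \tilde{\f}(\tilde{\x}(t))$), the chain rule gives that $\eta$ is $C^1$ with
\[
\dot{\eta}(t) = \frac{\partial h}{\partial \tilde{\x}}(\tilde{\x}(t))\,\tilde{\f}(\tilde{\x}(t)).
\]
Assumption \eqref{eq:cbf_eq_ames} then reads exactly as the scalar differential inequality $\dot{\eta}(t) \geq -\classkfun(\eta(t)) + \frac{\epsilon}{2}$ for all $t \in [0,\tau_{\max})$, with initial datum $\eta(0) = h(\tilde{\x}(0)) > 0$. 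Since $\classkfun$ is strictly increasing with $\classkfun(0)=0$, we have $\classkfun(s) > 0 \iff s > 0$, so the desired conclusion $\classkfun(h(\tilde{\x}(t))) > 0$ is equivalent to $\eta(t) > 0$, and it suffices to establish the latter.

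Next I would argue by contradiction. Suppose the set $A \triangleq \{t \in [0,\tau_{\max}) : \eta(t) \leq 0\}$ is nonempty and put $t_0 \triangleq \inf A$. Since $\eta(0) > 0$ and $\eta$ is continuous, $t_0 > 0$; moreover, continuity together with the definition of the infimum forces $\eta(t_0) = 0$ while $\eta(t) > 0$ for every $t < t_0$. Examining the left difference quotient at $t_0$ yields
\[
\dot{\eta}(t_0) = \lim_{s \uparrow t_0} \frac{\eta(t_0) - \eta(s)}{t_0 - s} \leq 0,
\]
because $\eta(t_0)=0$, $\eta(s) > 0$, and $t_0 - s > 0$ for $s < t_0$. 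On the other hand, evaluating the differential inequality at $t = t_0$ and using $\classkfun(0)=0$ gives $\dot{\eta}(t_0) \geq -\classkfun(0) + \frac{\epsilon}{2} = \frac{\epsilon}{2} > 0$. These two bounds on $\dot{\eta}(t_0)$ are incompatible, so $A = \emptyset$; that is, $\eta(t) > 0$ for all $t \in [0,\tau_{\max})$, which proves the claim.

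The substance of this result is almost entirely in the differential inequality, so the only real obstacle is the regularity bookkeeping: one must verify that $\eta$ is genuinely $C^1$ (so that the two-sided derivative at $t_0$ exists and the one-sided estimate indeed bounds it), and one must extract a well-defined first crossing time $t_0$ from mere continuity. I deliberately avoid invoking the comparison lemma against the scalar ODE $\dot{y} = -\classkfun(y) + \frac{\epsilon}{2}$, since $\classkfun$ is assumed only continuous and strictly increasing rather than locally Lipschitz, so that ODE need not have unique solutions; the contradiction argument sidesteps this issue completely. This recovers the forward-invariance statement of \citet[Theorem 1]{ames_control_2017}, specialized here to the $\tfrac{\epsilon}{2}$-strengthened barrier condition.
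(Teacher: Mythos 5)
Your proof is correct, but it takes a genuinely different route from the paper: the paper does not reprove this lemma at all, it simply asserts that the proof is identical to that of \citet[Theorem 1]{ames_control_2017}, remarking only that global local-Lipschitzness of $\tilde{\f}$ on $\StatSp$ and validity of \eqref{eq:cbf_eq_ames} at every state are unnecessary because only one fixed trajectory is considered. Your argument instead is self-contained: reduce to the scalar inequality $\dot{\eta}(t) \geq -\classkfun(\eta(t)) + \frac{\epsilon}{2}$ for $\eta = h \circ \tilde{\x}$, and at a putative first zero $t_0 \in (0,\tau_{\max})$ play the left difference quotient ($\dot{\eta}(t_0) \leq 0$) against the margin ($\dot{\eta}(t_0) \geq -\classkfun(0) + \frac{\epsilon}{2} = \frac{\epsilon}{2} > 0$); the regularity bookkeeping ($\eta \in C^1$, $\eta(t_0)=0$ at the infimum of the bad set) is handled correctly. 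This buys two real advantages over the citation. First, as you observe, it uses nothing about $\classkfun$ beyond strict monotonicity and $\classkfun(0)=0$ — not even continuity — which matters because the paper's assumptions never make $\classkfun$ locally Lipschitz, while the standard comparison-lemma proof underlying \citet[Theorem 1]{ames_control_2017} (and invoked elsewhere in this appendix via \citet{Khalil1996}) presupposes a well-posed comparison ODE. Second, it directly delivers the \emph{strict} conclusion $h(\tilde{\x}(t)) > 0$ by exploiting the $\frac{\epsilon}{2}$ margin, whereas the cited theorem in its original form establishes forward invariance $h \geq 0$ under the unstrengthened condition, so the paper's claim that the proofs are ``identical'' glosses over exactly the point your contradiction makes rigorous. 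What the paper's one-line proof buys in exchange is brevity and a pointer to a result that also carries stronger conclusions (e.g., attractivity of $\SafeSet$) that this lemma does not need; nothing in the lemma's statement is lost by your more elementary route.
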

\begin{proof}
    The proof is identical to that of \citet[Theorem 1]{ames_control_2017}. Note that, although \citet[Theorem 1]{ames_control_2017} requires $\tilde{\f}$ to be locally Lipschitz continuous in $\StatSp$ and \eqref{eq:cbf_eq_ames} to hold for any state in $\StatSp$, these properties are only used to generalize the result to all solutions of \eqref{eq:diff_eq_ames} with arbitrary $\tilde{\x}(0) \in \StatSp$, which we do not require in this case.
\end{proof}

We now show that finite-time trajectories obtained with the controller $\policy_{N, \textrm{safe}}$ are always strictly within the safe set, provided that the safety filter \eqref{eq:gp_cbf_socp} is strictly feasible.

\begin{lemma}
\label{lemma:feasible_trajectories_are_epsilon_safe}
Let \Cref{assumption:knownboundontimederivative,assumption:Lipschitzh,assp:robust_cbf_feasibility,assumption:noise_cbf,assumption:finiteRKHSnorm_CBFS} hold. For all $N\in \mathbb{N}$, define 
    \begin{align*}
\begin{split}
    \StatSp_N = \left\{  \x \in \StatSp  \ \Big \vert \ \max_{\uin \in \InputSp} LCB_N(\x, \uin)> 
     - \classkfun\left(h(\x)\right) + \frac{\epsilon}{2}\right\}
     \end{split}
\end{align*}
and the dynamical system
\begin{align}
\label{eq:dynamical_system_N}
    \dot{x}_N = f(\x_N) + g(\x_N) \policy_{\textrm{safe},N}(\x_N), \quad  t \in [t_N+\Delta t, t_{N+1}),
\end{align}
specified within the time interval $[t_N+\Delta t, t_{N+1})$, where the initial condition satisfies $ x_N(t_N+\Delta t) \in \StatSp_N $, $t_{N+1}$ is given by
\begin{align}
\begin{split}
    t_{N+1}\triangleq\inf  \Bigg\{ t \geq t_{N} + \Delta t \ \Bigg \vert \ \max_{\uin\in \InputSp} LCB_N(\x, \uin) = 
     - \classkfun(h(\x)) + \frac{\epsilon}{2} \Bigg\},
     \end{split}
\end{align}
and $\classkfun(h(\x_N(t_N+\Delta t)))>0$, and $\policy_{\textrm{safe},N}$ is given by \eqref{eq:gp_cbf_socp}. Then, if $\StatSp_N$ is non-empty, \eqref{eq:dynamical_system_N} has a unique solution $\x_N(t)$. Furthermore, with probability at least $1-\delta$, the trajectories $\x_N(t)$ satisfy the following conditions for all $N\in \mathbb{N}$:
    \begin{enumerate}[label=(\roman*)]
        \item $\x_N(t)\in \SafeSet$ for $t \in [t_N+\Delta t, t_{N+1})$. \label{item:lemmafeasibilityandsafety1}
        \item If $\classkfun\left(h({\x}_N(\hat{t}))\right) < \frac{\epsilon}{2}$ for any $\hat{t} \in [t_N+\Delta t, t_N)$, then $\classkfun\left(h({\x}_N(t))\right)> \classkfun\left(h({\x}_N(\hat{t}))\right)$ for all $t \in (\hat{t}, t_N)$. \label{item:lemmafeasibilityandsafety2}
        \item If $\classkfun\left(h({\x}_N(\hat{t}))\right) \leq \frac{\epsilon}{2}$ for any $\hat{t} \in [t_N+\Delta t, t_N]$, then $\classkfun\left(h({\x}_N(t))\right)\geq\classkfun\left(h({\x}_N(\hat{t}))\right)$ for all $t \in [\hat{t}, t_N]$. \label{item:lemmafeasibilityandsafety3}
    \end{enumerate}
\end{lemma}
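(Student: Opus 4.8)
The plan is to reduce the three claims to a single differential inequality along the closed-loop trajectory and then invoke barrier-type arguments. First I would establish well-posedness: on $\StatSp_N$ the filter \eqref{eq:gp_cbf_socp} is by definition strictly feasible, so \Cref{lemma:local_lipschitz_continuity} renders $\policy_{\textrm{safe},N}$ locally Lipschitz, and since $\f$ and $\g$ are locally Lipschitz the closed-loop vector field $\f + \g\policy_{\textrm{safe},N}$ is locally Lipschitz on $\StatSp_N$. The definition of $t_{N+1}$ as the first instant at which strict feasibility is lost keeps the trajectory inside $\StatSp_N$ throughout $[t_N+\Delta t, t_{N+1})$, so the Picard--Lindel\"of theorem yields a unique solution $\x_N(t)$.

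Next I would condition on the probability-$(1-\delta)$ event of \Cref{lemma:safetyifsetisnonempty}, on which $LCB_N(\x,\uin)\leq \dot h(\x,\uin)$ holds simultaneously for all $\x$, $\uin$, and all $N$; this single event is what supplies the ``for all $N$'' in the statement. Along the trajectory the constraint of \eqref{eq:gp_cbf_socp} reads $LCB_N(\x_N,\policy_{\textrm{safe},N}(\x_N))\geq -\classkfun(h(\x_N))+\tfrac{\epsilon}{2}$, so combining it with the lemma gives the master inequality
\[
\dot h(\x_N(t)) \geq -\classkfun\!\left(h(\x_N(t))\right) + \tfrac{\epsilon}{2}
\]
on the whole interval. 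Claim \ref{item:lemmafeasibilityandsafety1} then follows directly from \Cref{lemma:ames} applied with $\tilde\f = \f+\g\policy_{\textrm{safe},N}$: since $\classkfun(h(\x_N(t_N+\Delta t)))>0$ forces $h(\x_N(t_N+\Delta t))>0$ (because $\classkfun$ is strictly increasing with $\classkfun(0)=0$), the lemma yields $\classkfun(h(\x_N(t)))>0$, hence $\x_N(t)\in\SafeSet$.

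For claims \ref{item:lemmafeasibilityandsafety2} and \ref{item:lemmafeasibilityandsafety3} I would read the master inequality at the critical level. Setting $h^\star = \classkfun^{-1}(\epsilon/2)$, the inequality gives $\dot h(\x_N(t))>0$ whenever $h(\x_N(t))<h^\star$ and $\dot h(\x_N(t))\geq 0$ whenever $h(\x_N(t))=h^\star$; since $\classkfun$ is strictly increasing, monotonicity of $h$ is equivalent to monotonicity of $\classkfun(h)$. The conclusions then follow from a Nagumo/barrier contradiction argument: if $\classkfun(h(\hat t))<\tfrac{\epsilon}{2}$ yet $h$ returned to the level $h(\hat t)$ at a first time $t_2>\hat t$, the left derivative would satisfy $\dot h(t_2^-)\leq 0$, contradicting $\dot h(t_2)>0$ (valid because $h(t_2)=h(\hat t)<h^\star$), giving the strict statement \ref{item:lemmafeasibilityandsafety2}, while the analogous non-strict version, using forward invariance of $\{\,h\geq h^\star\,\}$, gives \ref{item:lemmafeasibilityandsafety3}. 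I expect this barrier argument to be the main obstacle, as it requires carefully distinguishing the strict regime ($\classkfun(h)<\tfrac{\epsilon}{2}$) from the non-strict one ($\classkfun(h)\leq\tfrac{\epsilon}{2}$) and handling the instant the trajectory reaches $h^\star$, where the lower bound on $\dot h$ degrades from strictly positive to merely nonnegative; by contrast, well-posedness and claim \ref{item:lemmafeasibilityandsafety1} are essentially immediate consequences of \Cref{lemma:local_lipschitz_continuity,lemma:safetyifsetisnonempty,lemma:ames}.
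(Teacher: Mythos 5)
Your proposal is correct and follows essentially the same route as the paper: strict feasibility on $\StatSp_N$ together with \Cref{lemma:local_lipschitz_continuity} and Picard--Lindel\"of for well-posedness, \Cref{lemma:safetyifsetisnonempty} combined with the constraint of \eqref{eq:gp_cbf_socp} to obtain $\dot h(\x_N) \geq -\classkfun(h(\x_N)) + \frac{\epsilon}{2}$ along the trajectory, and \Cref{lemma:ames} for claim \ref{item:lemmafeasibilityandsafety1}. The only difference is cosmetic: for claims \ref{item:lemmafeasibilityandsafety2} and \ref{item:lemmafeasibilityandsafety3} the paper derives strict (resp.\ non-strict) monotonicity of $h$ via the Comparison Lemma, whereas you use an equivalent first-crossing/Nagumo contradiction; both rest on the same observation that $\dot h>0$ below, and $\dot h\geq 0$ at, the level $\classkfun^{-1}(\epsilon/2)$.
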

\begin{proof}
We begin by showing condition \ref{item:lemmafeasibilityandsafety1}, then show \ref{item:lemmafeasibilityandsafety2} and \ref{item:lemmafeasibilityandsafety3}. 

\ref{item:lemmafeasibilityandsafety1} Due to the definition of $\mathcal{X}_N$, the safety filter \eqref{eq:gp_cbf_socp} is strictly feasible for all $\x\in \mathcal{X}_N$. Furthermore, the definition of $t_{N+1}$ implies that a candidate solution  $\x_N(t)$ satisfies  $\x_N(t) \in \mathcal{X}_N$ for all $t\in [t_N+\Delta t, t_{N+1})$. Moreover, $\policy_{N,\textrm{safe}}$ is well-defined and strictly feasible in $\StatSp_N$. By \Cref{lemma:local_lipschitz_continuity}, $\policy_{N,\textrm{safe}}$ is Lipschitz continuous around a neighborhood of any $\x\in\StatSp_N$. Since $\f$ and $\g$ are locally Lipschitz continuous, this implies that $\f(\x) + \g(\x) \policy_{N, \textrm{safe}}(\x)$ is locally Lipschitz continuous in $\StatSp_N$. Hence, by the Picard-Lindelöf Theorem, \eqref{eq:dynamical_system_N} has a unique solution  $\x_N(t)$. Since $\StatSp_N$ is bounded and  $\x_N(t)\in\StatSp_N$ for all $t\in[t_N+\Delta t,t_{N+1})$,  $\x_N(t)$ is also well-defined for all $t\in[t_N+\Delta t,t_{N+1})$. Moreover, due to the definition of $\policy_{N,\text{safe}}$, \Cref{lemma:safetyifsetisnonempty} implies that, with probability at least $1-\delta$,
\begin{align*}
    \frac{\partial h}{\partial \x}(\x_N)\left( f(\x_N) + g(\x_N) \policy_{N, \textrm{safe}}(\x_N)\right) \geq -\classkfun(h(\x_N)) + \frac{\epsilon}{2}
\end{align*}
holds for all $N$ and all $\x_N\in\StatSp_N$. By \Cref{lemma:ames}, $\x_N(t)\in \SafeSet$ holds for all $t\in[t_N+\Delta t, t_{N+1})$ and all $N\in\mathbb{N}$ with probability at least $1-\delta$.


\ref{item:lemmafeasibilityandsafety2} The following holds with probability at least $1-\delta$: For all  $\x_N(t)$ and all $N\in \mathbb{N}$, if $ \classkfun(h(\x_N(t')) < \frac{\epsilon}{2}$ for any $t' \in [t_N+\Delta t, t_{N+1})$, then, by \Cref{lemma:safetyifsetisnonempty} and the definition of $\policy_{\textrm{safe},N}$:
\begin{align*}
    &\dot{h}({x}, \policy_{\textrm{safe},N}(\x_N(t'))) \geq 
     - \classkfun\left(h(\x(t'))\right) + \frac{\epsilon}{2} > -\frac{\epsilon}{2} + \frac{\epsilon}{2} = 0
\end{align*}
Due to continuity with respect to time, this means there exists an instance in time $t_c \in (t', t_{N+1})$ and an $\tilde{\varepsilon}>0$, such that $\dot{h}({\x}_N(t), \policy_{\textrm{safe},N}(\x_N(t)))>\tilde{\varepsilon}$ holds for all $t \in (t', t_c]$. By the Comparison Lemma \cite{Khalil1996}, $h({\x}_N(t))>h({\x}_N(t'))+ \tilde{\varepsilon}(t_c-t')>h({\x}_N(t'))$ for all $t \in (t', t_c]$. By the same argument, $h({\x}_N(t))$ is strictly increasing in $t$ whenever $h({\x}_N(t'))<h({\x}_N(t))<\frac{\epsilon}{2}$, hence $h({\x}_N(t))> h({\x}_N(t'))$  for all $t \in (t_c, t_N]$. Since $\classkfun$ is strictly increasing, this implies the desired result.




\ref{item:lemmafeasibilityandsafety3} The proof is identical to that of \ref{item:lemmafeasibilityandsafety2}, except that the strict inequality is replaced by a non-strict one.
\end{proof}

A consequence of \Cref{lemma:feasible_trajectories_are_epsilon_safe} is that safety is guaranteed if infeasibility never occurs within the safe set:

\begin{corollary}
\label{corollary:feasibilityevverywhereimpliessafety}
Let \Cref{assp:robust_cbf_feasibility,assumption:finiteRKHSnorm_CBFS,assumption:Lipschitzh,assumption:locally_lipschitz_controller,assumption:noise_cbf} hold.
With probability $1-\delta$, if
\begin{align}
\label{eq:feasibility_for_corollary}
     \max_{\uin\in \InputSp} LCB_N(\x, \uin) > 
     - \classkfun(h(\x)) + \frac{\epsilon}{2} , \quad \forall \ x\in \SafeSet
\end{align}
holds for any $N \in \mathbb{N}$, $\policy_{\textrm{safe},N}(\x)$ is well-defined for all $\x\in \SafeSet$. Furthermore, $t_{N+1}=\infty$ and the closed-loop system 
\begin{align*}
    \dot{\x}_N = f(\x_N) + g(\x_N) \policy_{N, \textrm{safe}}(\x_N), \quad  t \in [t_N+\Delta N, \infty),
\end{align*}
is safe. 
\end{corollary}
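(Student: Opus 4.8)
The plan is to obtain \Cref{corollary:feasibilityevverywhereimpliessafety} as an almost immediate consequence of \Cref{lemma:feasible_trajectories_are_epsilon_safe}, the key observation being that the hypothesis \eqref{eq:feasibility_for_corollary} is precisely the statement $\SafeSet \subseteq \StatSp_N$, i.e., every state in the safe set lies in the strict-feasibility region $\StatSp_N$ defined in \Cref{lemma:feasible_trajectories_are_epsilon_safe}. Everything then reduces to showing that global feasibility on $\SafeSet$ prevents the triggering time $t_{N+1}$ from ever being finite.

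First I would establish well-definedness of $\policy_{N,\textrm{safe}}$ on $\SafeSet$. Since \eqref{eq:feasibility_for_corollary} gives $\max_{\uin\in\InputSp} LCB_N(\x,\uin) > -\classkfun(h(\x)) + \tfrac{\epsilon}{2}$ for every $\x\in\SafeSet$, the constraint in \eqref{eq:gp_cbf_socp} is strictly feasible at each such $\x$; together with compactness of $\InputSp$ and strict convexity of the objective, this yields a unique minimizer, so $\policy_{N,\textrm{safe}}$ is well-defined (and locally Lipschitz by \Cref{lemma:local_lipschitz_continuity}) throughout $\SafeSet$, which is the first claim.

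Next I would show $t_{N+1}=\infty$ by contradiction. \Cref{lemma:feasible_trajectories_are_epsilon_safe} supplies a unique solution $\x_N(t)$ on $[t_N+\Delta t, t_{N+1})$ that remains in $\SafeSet$ by its condition \ref{item:lemmafeasibilityandsafety1}. Suppose $t_{N+1}<\infty$. Because the trajectory stays in the compact set $\SafeSet$ and the dynamics are bounded by \Cref{assumption:knownboundontimederivative}, $\x_N(\cdot)$ extends continuously to $t_{N+1}$ and $\x_N(t_{N+1})\in\SafeSet$ by closedness of $\SafeSet$ (\Cref{assumption:Lipschitzh}). The map $t\mapsto \max_{\uin\in\InputSp} LCB_N(\x_N(t),\uin) + \classkfun(h(\x_N(t)))$ is continuous, so the infimum in the definition of $t_{N+1}$ is attained, forcing $\max_{\uin\in\InputSp} LCB_N(\x_N(t_{N+1}),\uin) = -\classkfun(h(\x_N(t_{N+1}))) + \tfrac{\epsilon}{2}$. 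But applying \eqref{eq:feasibility_for_corollary} at $\x_N(t_{N+1})\in\SafeSet$ gives the strict inequality $>$, a contradiction. Hence $t_{N+1}=\infty$.

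Finally, safety follows: with $t_{N+1}=\infty$, condition \ref{item:lemmafeasibilityandsafety1} of \Cref{lemma:feasible_trajectories_are_epsilon_safe} holds on all of $[t_N+\Delta t, \infty)$, i.e., $\x_N(t)\in\SafeSet$ for every $t$, which is exactly the asserted safety. I expect the only delicate point to be the limiting argument at $t_{N+1}$: verifying that the solution extends continuously to the boundary time, that $\x_N(t_{N+1})$ indeed lies in $\SafeSet$, and that the triggering equality is genuinely attained rather than merely approached. All three rest on compactness of $\SafeSet$, boundedness of $\dot{\x}$, and continuity of the LCB, so they are routine but worth stating carefully.
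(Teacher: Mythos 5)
Your proposal is correct and follows essentially the same route as the paper's proof: well-definedness from non-emptiness of the feasible set, a contradiction argument showing $t_{N+1}=\infty$ via continuity of the trajectory up to $t_{N+1}$ and strict feasibility of \eqref{eq:gp_cbf_socp} everywhere in $\SafeSet$, and safety from \Cref{lemma:feasible_trajectories_are_epsilon_safe}. Your added care about attainment of the triggering equality at $t_{N+1}$ (via continuity of the LCB and closedness of $\SafeSet$) is a slightly more explicit version of the same limiting step the paper performs.
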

\begin{proof}
Due to the definition of $\policy_{\textrm{safe},N}$, \eqref{eq:feasibility_for_corollary} directly implies that its feasible region is non-empty, i.e., $\policy_{\textrm{safe},N}$ is well-defined, for all $\x\in \SafeSet$.
We now show that $t_{N+1}=\infty$ by contradiction. Hence, suppose $t_{N+1}<\infty$ holds. By \Cref{item:lemmafeasibilityandsafety2} in \Cref{lemma:feasible_trajectories_are_epsilon_safe}, the trajectory  $\x_N(t)$ stays strictly in the interior of $\SafeSet$ for $t \in [t_N+\Delta N, t_{N+1})$. Due to continuity with respect to time,  $\x_N(t_{N+1}) =\lim_{t\rightarrow t_{N+1}} x_N(t) \in \SafeSet$. Recall that $t_{N+1}$ denotes the earliest instance when the CBF-SOCP specifying $\policy_{\textrm{safe},N}$ is no longer strictly feasible. Since  $\x_N(t) \in \SafeSet$ and $\policy_{\textrm{safe},N}$ is strictly feasible in $\SafeSet$, this is a contradiction, meaning $t_{N+1}=\infty$.
\end{proof}

\subsection*{Recovering Feasibility Through Bandit Exploration}

We now analyze the effect of data collected under \Cref{alg:feasibility_recovering_algorithm_temporal} on the feasibility of the safety filter. Specifically, we demonstrate that by gathering enough data before exiting the safe set $\SafeSet$, strict feasibility is ensured anywhere in $\SafeSet$. To achieve this, we will need the following preliminary results, which bound the growth of the cumulative worst-case model error.

\begin{lemma}
\label{lemma:srinivasregretbound}
    The posterior variance satisfies
    \begin{align}
        \sum_{q=1}^N
\sigma^2_{i,q}\left(\xaug^{(q-1)}\right) \leq \frac{2}{\log{\left(1 + \sigmains^{-2}\right)}} \gamma_{i,N} 
    \end{align}
    for all $N$, all $i$ and all $\xaug^{(1)},...,\xaug^{(N)} \in \tilde{\StatSp}$.
\end{lemma}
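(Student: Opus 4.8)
The plan is to reproduce the classical information-gain argument underlying the GP-UCB regret analysis (Srinivas et al.). The bound rests on three ingredients: an exact decomposition of the maximal information gain $\gamma_{i,N}$ into a telescoping sum of sequential posterior variances, an elementary real-analysis inequality comparing $s$ with $\log(1+s)$ on a bounded interval, and a uniform bound on the prior (hence posterior) variance. Throughout I interpret the summand $\sigma^2_{i,q}(\xaug^{(q-1)})$ in the standard way, namely as the posterior variance at the $q$-th queried point conditioned on the preceding observations; this is the quantity that appears naturally in the chain-rule decomposition.

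First I would fix an arbitrary sequence of query points $\xaug^{(1)},\dots,\xaug^{(N)}\in\tilde{\StatSp}$ and record the determinant decomposition
\[
\tfrac{1}{2}\log\bigl\lvert \mb{I}_N + \sigmains^{-2}\mb{K}_{i,N}\bigr\rvert
= \tfrac{1}{2}\sum_{q=1}^{N}\log\!\bigl(1 + \sigmains^{-2}\,\sigma^2_{i,q-1}(\xaug^{(q)})\bigr).
\]
This identity is the quantity maximized in the definition of $\gamma_{i,N}$, and it follows by induction from the recursive rank-one update of the GP posterior (equivalently, from the matrix determinant lemma applied one data point at a time), using that conditioning on the first $q-1$ points reduces the marginal variance of the $q$-th observation to $\sigmains^2 + \sigma^2_{i,q-1}(\xaug^{(q)})$. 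Since $\gamma_{i,N}$ is the maximum of the left-hand side over all point selections, the right-hand side for our particular sequence is bounded above by $\gamma_{i,N}$.

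Next I would invoke the elementary inequality $s \le \tfrac{s_{\max}}{\log(1+s_{\max})}\log(1+s)$, valid for all $s\in[0,s_{\max}]$ by concavity of $\log$, applied with $s = \sigmains^{-2}\sigma^2_{i,q-1}(\xaug^{(q)})$. To choose $s_{\max}$ I would bound the posterior variance by the prior variance, $\sigma^2_{i,q-1}(\xaug^{(q)}) \le k_i(\xaug^{(q)},\xaug^{(q)}) \le 1$ (using the standard kernel normalization $k_i \le 1$), so that $s \le \sigmains^{-2} =: s_{\max}$. Multiplying the inequality through by $\sigmains^2$ yields the per-step bound
\[
\sigma^2_{i,q-1}(\xaug^{(q)}) \;\le\; \frac{1}{\log\!\bigl(1+\sigmains^{-2}\bigr)}\,\log\!\bigl(1 + \sigmains^{-2}\sigma^2_{i,q-1}(\xaug^{(q)})\bigr).
\]
Summing over $q=1,\dots,N$ and substituting the decomposition of the previous step gives exactly the factor of $2$ in the claimed bound, since $\tfrac12\sum_q \log(1+\sigmains^{-2}\sigma^2_{i,q-1})\le \gamma_{i,N}$.

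I expect the main obstacle to be the first step, the exact factorization of the log-determinant into the sum of sequential posterior variances; this is where the probabilistic interpretation as mutual information (and its chain rule) does the real work, and care is needed to align the indexing convention of the statement with the sequential variances. The remaining ingredients are routine once the uniform prior-variance bound $k_i \le 1$ is assumed, which is the only additional hypothesis beyond those stated and should be noted explicitly.
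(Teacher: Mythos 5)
Your proposal is correct and follows essentially the same route as the paper, which simply cites \citet[Lemmas 5.3 and 5.4]{Srinivas2012} for the two ingredients you reproduce explicitly: the chain-rule factorization of the log-determinant into sequential posterior variances and the concavity inequality $s \le \frac{s_{\max}}{\log(1+s_{\max})}\log(1+s)$. Your explicit flagging of the implicit kernel normalization $k_i(\xaug,\xaug)\le 1$ (needed to take $s_{\max}=\sigmains^{-2}$) is a fair observation, since the paper leaves that hypothesis unstated.
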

\begin{proof}
    Employing the same argument as in the proof of \citet[Lemma 5.4]{Srinivas2012}, we get
    \begin{align*}
\sigma^2_{i,q}\left(\xaug^{(q)}\right) \leq \frac{1}{\log{\left(1 + \sigmains^{-2}\right)}} \log\left(1 + \sigmains^{-2}\sigma^2_{i,q}\left(\xaug^{(q)}\right) \right).
    \end{align*}
    The result then follows from \citet[Lemma 5.3]{Srinivas2012}.
\end{proof}

\begin{lemma}
\label{lemma:srinivasregretboundmatrix}
    For all $N$, the posterior covariance matrices satisfy
    \begin{align}
        \sum_{q=1}^{N} \text{tr}\left(\mb{\Sigma}_N(\x^{(N)},\uin^{(N)})\right)  \leq  \sum\limits_{i=1}^\dimx \frac{2}{\log{\left(1 + \sigmains^{-2}\right)}} \gamma_{i,N}.
    \end{align}
\end{lemma}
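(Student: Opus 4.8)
The plan is to reduce the matrix statement to the scalar information-gain bound already established in \Cref{lemma:srinivasregretbound}, exploiting the diagonal structure of the posterior covariance. Recall from \eqref{eq:multivariategp} that, because each output dimension is modeled by a separate GP with its own composite kernel $k_i$, the matrix $\mb{\Sigma}_N(\cdot)$ is diagonal with the per-dimension posterior variances $\sigma^2_{i,N}(\cdot)$ on its diagonal. Consequently $\text{tr}\left(\mb{\Sigma}_N(\cdot)\right) = \sum_{i=1}^{\dimx}\sigma^2_{i,N}(\cdot)$. This is the only structural fact required, and it is immediate from the construction of the multivariate model.

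First I would substitute this identity into the left-hand side, interpreting the summand consistently with \Cref{lemma:srinivasregretbound} (so that the covariance at stage $q$ is evaluated at the corresponding sampling point $\xaug^{(q-1)}$), and then interchange the two finite sums:
\begin{align*}
    \sum_{q=1}^N \text{tr}\left(\mb{\Sigma}_q(\xaug^{(q-1)})\right) = \sum_{q=1}^N \sum_{i=1}^{\dimx} \sigma^2_{i,q}(\xaug^{(q-1)}) = \sum_{i=1}^{\dimx} \sum_{q=1}^N \sigma^2_{i,q}(\xaug^{(q-1)}).
\end{align*}
Next I would apply \Cref{lemma:srinivasregretbound} once per output dimension $i$, which bounds each inner sum over $q$ by $\frac{2}{\log(1+\sigmains^{-2})}\gamma_{i,N}$. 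Summing these $\dimx$ bounds over $i$ reproduces exactly the claimed right-hand side $\sum_{i=1}^{\dimx}\frac{2}{\log(1+\sigmains^{-2})}\gamma_{i,N}$.

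There is essentially no analytical obstacle here: the statement is a direct corollary of the scalar bound once the diagonal structure is recognized, and the only care needed is index bookkeeping — making sure the covariance evaluated at the $q$-th stage matches the quantity controlled by \Cref{lemma:srinivasregretbound}, and that the noise level $\sigmains$ and information gain $\gamma_{i,N}$ are the per-dimension quantities associated with kernel $k_i$. Since \Cref{lemma:srinivasregretbound} holds uniformly over all admissible sampling sequences $\xaug^{(1)},\dots,\xaug^{(N)}$, the particular sequence generated by \Cref{alg:feasibility_recovering_algorithm_temporal} is covered automatically, so no separate argument about the data-collection mechanism is required.
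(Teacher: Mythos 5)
Your proposal is correct and follows essentially the same route as the paper: use the diagonal structure of $\mb{\Sigma}_N$ so that the trace equals the sum of the per-dimension posterior variances, swap the finite sums, and apply \Cref{lemma:srinivasregretbound} once per output dimension. Your explicit re-indexing of the summand as the stage-$q$ covariance at the stage-$q$ sample point is in fact cleaner than the paper's own (index-abusing) statement of the same argument.
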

\begin{proof}

By \Cref{lemma:srinivasregretbound},
\begin{align*}
\begin{split}
        &\sum_{q=1}^{N} \text{tr}\left(\mb{\Sigma}_N(\x^{(N)},\uin^{(N)})\right) = \sum_{q=1}^{N} \sum\limits_{i=1}^\dimx  \sigma^2_{i,N}\left(\xaug^{(N-1)}\right) \leq \sum\limits_{i=1}^\dimx \frac{2}{\log{\left(1 + \sigmains^{-2}\right)}} \gamma_{i,N}.
        \end{split}
    \end{align*}
\end{proof}
Lemma \ref{lemma:srinivasregretboundmatrix} bounds the growth of the posterior covariance matrices. 

We now show that strict feasibility in $\SafeSet$ is recovered after collecting at most $\DNmax$ data points.

\begin{lemma}
\label{label:strict_feasibility_after_enough_data}
Let \Cref{assp:robust_cbf_feasibility,assumption:finiteRKHSnorm_CBFS,assumption:Lipschitzh,assumption:locally_lipschitz_controller,assumption:noise_cbf} hold. The following holds with probability at least $1-\delta$: If \Cref{alg:feasibility_recovering_algorithm_temporal} collects $\DNmax-1$ measurements in $\SafeSet$, where $\DNmax$ satisfies
\begin{align*}
    \DNmax > \frac{32 \beta_{\DNmax}^2 L_h^2}{\epsilon^2\log{\left(1 + \sigmains^{-2}\right)}} \sum\limits_{i=1}^\dimx \gamma_{i,\DNmax} 
\end{align*}
then the safety filter is strictly feasible everywhere in $\SafeSet$.
\end{lemma}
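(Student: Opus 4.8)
The plan is to argue by contradiction using a counting argument on the posterior variance at the queried points. First I would recall that the safety filter \eqref{eq:gp_cbf_socp} is strictly feasible at a state $\x$ precisely when $\max_{\uin \in \InputSp} LCB_N(\x,\uin) > -\classkfun(h(\x)) + \frac{\epsilon}{2}$. The central observation is that \emph{every} state at which the filter is infeasible must carry a large posterior variance. Concretely, suppose the filter is infeasible at some $\x \in \SafeSet$ given $N$ data points, so that $\max_{\uin} LCB_N(\x,\uin) \le -\classkfun(h(\x)) + \frac{\epsilon}{2}$, and let $\bar{\uin}$ maximize $UCB_N(\x,\cdot)$ as in \eqref{eq:gp_ucb}. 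By \Cref{assp:robust_cbf_feasibility} there is a $\uin^\ast$ with $\dot h(\x,\uin^\ast) \ge -\classkfun(h(\x)) + \epsilon$; combining this with the upper bound $\dot h \le UCB_N$ from \Cref{lemma:safetyifsetisnonempty} gives $UCB_N(\x,\bar{\uin}) \ge UCB_N(\x,\uin^\ast) \ge -\classkfun(h(\x)) + \epsilon$. Since $LCB_N(\x,\bar{\uin}) \le -\classkfun(h(\x)) + \frac{\epsilon}{2}$, the UCB--LCB gap, which equals $2 L_h \beta_N \sqrt{\text{tr}(\mb{\Sigma}_N(\x,\bar{\uin}))}$, is at least $\frac{\epsilon}{2}$, forcing
\begin{align*}
\text{tr}\!\left(\mb{\Sigma}_N(\x,\bar{\uin})\right) \ge \frac{\epsilon^2}{16 L_h^2 \beta_N^2}.
\end{align*}

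Next I would note that this is exactly the situation at each data-collection instant: \Cref{alg:feasibility_recovering_algorithm_temporal} queries a point only when the triggering condition \eqref{eq:triggering_condition} holds (infeasibility) and it selects the input by maximizing the UCB, so the bound above applies verbatim to every queried pair $(\x^{(q)},\uin^{(q)})$ with $N = q-1$. Using that $\beta_N$ is non-decreasing in $N$ (because $\gamma_{i,N}$ is), I can replace each $\beta_{q-1}$ by the uniform bound $\beta_{\DNmax}$.

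For the contradiction, I would suppose that after collecting $\DNmax - 1$ measurements in $\SafeSet$ the filter were still infeasible at some $\bar{\x} \in \SafeSet$. Applying the variance lower bound at $\bar{\x}$ produces a hypothetical $\DNmax$-th query $(\bar{\x},\bar{\uin})$ with $\text{tr}(\mb{\Sigma}_{\DNmax-1}(\bar{\x},\bar{\uin})) \ge \epsilon^2/(16 L_h^2 \beta_{\DNmax}^2)$. Summing the variance lower bounds over all $\DNmax$ queries yields $\sum_{q=1}^{\DNmax} \text{tr}(\mb{\Sigma}_{q-1}(\x^{(q)},\uin^{(q)})) \ge \DNmax\, \epsilon^2/(16 L_h^2 \beta_{\DNmax}^2)$. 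On the other hand, since the information-gain bound of \Cref{lemma:srinivasregretboundmatrix} holds for any query sequence, the same sum is at most $\frac{2}{\log(1+\sigmains^{-2})}\sum_{i=1}^{\dimx}\gamma_{i,\DNmax}$. Chaining the two inequalities and rearranging gives $\DNmax \le 32 L_h^2 \beta_{\DNmax}^2 \big(\epsilon^2 \log(1+\sigmains^{-2})\big)^{-1}\sum_i \gamma_{i,\DNmax}$, directly contradicting the hypothesis on $\DNmax$. Hence no such $\bar{\x}$ exists and the filter is strictly feasible everywhere in $\SafeSet$; the whole argument lives on the probability-$(1-\delta)$ event of \Cref{lemma:chowdhury_multivariate}, so the conclusion holds with probability at least $1-\delta$.

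The step I expect to be the main obstacle is the variance lower bound in the first paragraph, since it is the only place where the \emph{true} dynamics (through \Cref{assp:robust_cbf_feasibility}) must be reconciled with the purely data-driven confidence bounds, and it hinges on the specific $\frac{\epsilon}{2}$ margin baked into \eqref{eq:gp_cbf_socp}. Care is also needed to justify appending the hypothetical infeasible point to the genuinely collected sequence without breaking the information-gain bound, and to confirm the monotonicity of $\beta_N$ that makes the per-step constants uniform.
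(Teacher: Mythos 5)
Your proposal is correct and follows essentially the same route as the paper's proof: a contradiction argument in which each infeasibility event forces $2L_h\beta_N\sqrt{\text{tr}(\mb{\Sigma}_N)}\geq\frac{\epsilon}{2}$ via \Cref{assp:robust_cbf_feasibility} and the confidence bounds of \Cref{lemma:safetyifsetisnonempty}, followed by summing the squared variances, invoking \Cref{lemma:srinivasregretboundmatrix}, and contradicting the choice of $\DNmax$. Your choice of the UCB maximizer for the hypothetical $\DNmax$-th query is, if anything, slightly cleaner than the paper's use of the LCB maximizer, but the argument is otherwise identical.
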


\begin{proof}
    We employ a proof by contradiction. Hence, assume that after $\DNmax-1$ observations have been collected, there exists an $\x^{\text{c}}\in \SafeSet$, such that
    \begin{align}
    \label{eq:assumption_for_contradiction}
     \max_{\uin\in \InputSp} LCB_{\DNmax}(\x^{\text{c}},\uin) \leq 
     - \classkfun(h(\x^{\text{c}})) + \frac{\epsilon}{2} .
\end{align}
In the following, we set \begin{align*}x^{(\DNmax)} &\triangleq \x^{\text{c}}, \\ \uin^{(\DNmax)}&\triangleq \arg \max_{u\in\mathcal{U}}LCB_{\DNmax}(\x^{\text{c}},\uin).\end{align*}

Recall that 
\Cref{alg:feasibility_recovering_algorithm_temporal} only collects observations $\x^{(N)}$,$\uin^{(N)}$ if \eqref{eq:gp_cbf_socp} is not strictly feasible, i.e.,
    \begin{align*}
    &LCB_{N}\left(\x^{(N)},\uin^{(N)}\right) \leq \max_{\uin\in \InputSp} LCB_{N}\left(\x^{(N)},\uin\right) \\ \leq &
     - \classkfun\left(h(\x^{(N)})\right) + \frac{\epsilon}{2}
\end{align*}
for all $u\in\mathcal{U}$. 
Furthermore, due to \Cref{lemma:safetyifsetisnonempty}, with probability at least $1-\delta$,
\begin{align*}
\begin{split}
&\sup_{u\in\mathcal{U} }\dot{h}\left(\x^{(N)},\uin\right) \leq \max_{u\in\mathcal{U} }UCB_N\left(\x^{(N)},\uin\right) \\
= & UCB_N\left(\x^{(N)},\uin^{(N)}\right),
   \end{split}
\end{align*}
holds for all $u \in \mathcal{U}$ and all $N\in \mathbb{N}$. By employing the identity
\begin{align*}
    &LCB_N(\x, \uin)-UCB_N(\x, \uin) =- 2L_h\beta_{N}\sqrt{\text{tr}\left(\mb{\Sigma}_N(\x, \uin)\right)},
\end{align*}
we then obtain, with probability at least $1-\delta$, for all $N\in \mathbb{N}$:
   \begin{align*}
        - \classkfun(h(\x^{(N)})) +\frac{\epsilon}{2} \geq & LCB_N\left(\x^{(N)},\uin^{(N)}\right)\\
        \geq & LCB_N\left(\x^{(N)},\uin^{(N)}\right) + \sup_{u\in\mathcal{U}}\dot{h}(\x^{(N)},\uin) - UCB_N\left(\x^{(N)},\uin ^{(N)}\right) \\ 
        =& \sup_{u\in\mathcal{U}} \dot{h}(\x^{(N)},\uin) - 2 L_h\beta_{N}\sqrt{\text{tr}\left(\mb{\Sigma}_N(\x^{(N)},\uin^{(N)})\right)}\\
        \geq & -\classkfun(h(\x^{(N)})) +\epsilon- 2 L_h\beta_{N}\sqrt{\text{tr}\left(\mb{\Sigma}_N(\x^{(N)},\uin^{(N)})\right)},
    \end{align*}
    where the last inequality is due to \Cref{assp:robust_cbf_feasibility}. Hence, with probability at least $1-\delta$,
\begin{align*}
    2L_h \beta_{N} \sqrt{\text{tr}\left(\mb{\Sigma}_N\left(\x^{(N)},\uin^{(N)}\right)\right)} \geq \frac{\epsilon}{2}, \quad \forall \ N \in \mathbb{N}.
\end{align*}
By squaring and summing up from $N=1$ to $\DNmax$, then employing \Cref{lemma:srinivasregretboundmatrix} and \eqref{eq:assumption_for_contradiction}, we get
\allowdisplaybreaks
 \begin{align*}
        \DNmax \left(\frac{\epsilon}{2}\right)^2 
 = &\sum_{N=1}^{\DNmax} \left(\frac{\epsilon}{2}\right)^2 \\ \leq &  \sum_{N=1}^{\DNmax}   \left(2L_h \beta_{N}\sqrt{\text{tr}\left(\mb{\Sigma}_N(\x^{(N)},\uin^{(N)})\right)} \right)^2\\
        \leq & \beta_{\DNmax}^24L_h^2\sum_{N=\Nzero}^{\DNmax} \text{tr}\left(\mb{\Sigma}_N(\x^{(N)},\uin^{(N)})\right) 
        \\ \leq &  \beta_{\DNmax}^2 4L_h^2 \sum\limits_{i=1}^\dimx \frac{2}{\log{\left(1 + \sigmains^{-2}\right)}} \gamma_{i,\DNmax}.
        \end{align*}
        This implies 
        \begin{align*}
    \DNmax \leq \frac{32 \beta_{\DNmax}^2 L_h^2}{\epsilon^2\log{\left(1 + \sigmains^{-2}\right)}} \sum\limits_{i=1}^\dimx \gamma_{i,\DNmax},
\end{align*} which is a contradiction. Hence, the safety filter is strictly feasible in all of $\SafeSet$.
\end{proof}

Next, we analyze the amount of data collected under \Cref{alg:feasibility_recovering_algorithm_temporal} before potentially exiting the safe set $\SafeSet$. Specifically, we show that the closed-loop system under \Cref{alg:feasibility_recovering_algorithm_temporal} collects at least $\DNmax$ data points before exiting the safe set. To do so, we will require the following result:

\begin{lemma}
\label{lemma:lasttimestepwasinfeasible}
Let \Cref{assumption:knownboundontimederivative,assumption:Lipschitzh,assp:robust_cbf_feasibility,assumption:noise_cbf,assumption:finiteRKHSnorm_CBFS,assumption:locally_lipschitz_controller} hold. Then, with probability at least $1-\delta$: consider the closed-loop system \eqref{eq:cl_dyn} and assume there exists a time interval $[t', t'']$ with $t'<t''$, such that
\begin{align}
\label{eq:intervaldefinition}
\begin{split}
\classkfun(h(\x(t')) \leq &\frac{\epsilon}{2}, \\  \classkfun(h(\x(t))<& \classkfun(h(\x(t')), \quad  \forall \ t \in (t',t''],
\end{split}
\end{align}
and let 
\begin{align}
\label{eq:definitiontNprime}
t_{N'} = \max_{N \in \mathbb{N}} t_N, \quad \text{s.t.} \quad t_N\leq t'\end{align}
denote the last time instance when data is collected up until~$t'$. Then, the safety filter is not strictly feasible at times $t'$ and $t_N$, and \[\vert \classkfun(h(\x(t_N))) -\classkfun(h(\x(t'))) \vert< L_\classkfun L_h \Lsystem\Delta t\] holds, where $\Delta t$ is the sampling frequency used by \Cref{alg:feasibility_recovering_algorithm_temporal}.
\end{lemma}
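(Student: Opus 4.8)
The plan is to establish the three conclusions in order, deriving the Lipschitz estimate last from a preliminary bound $t'-t_{N'}<\Delta t$ on the elapsed time. The first two claims (infeasibility at $t'$ and at $t_{N'}$) are logical, while the estimate is quantitative.

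First I would prove strict infeasibility at $t'$ by contradiction. If the filter were strictly feasible at $t'$, then $\x(t')$ would lie in a feasible phase on which $\policy_{N,\textrm{safe}}$ is applied, so $t'$ belongs to an interval $[t_N+\Delta t,t_{N+1})$ of the kind treated in \Cref{lemma:feasible_trajectories_are_epsilon_safe}, with $t_{N+1}>t'$. Since $\classkfun(h(\x(t')))\le\frac{\epsilon}{2}$, item \ref{item:lemmafeasibilityandsafety3} of that lemma forces $\classkfun(h(\x(t)))\ge\classkfun(h(\x(t')))$ for all $t\ge t'$ in this interval, hence on a right-neighborhood of $t'$; this directly contradicts the hypothesis $\classkfun(h(\x(t)))<\classkfun(h(\x(t')))$ on $(t',t'']$. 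Infeasibility at $t_{N'}$ is then immediate: by construction, \Cref{alg:feasibility_recovering_algorithm_temporal} only assigns a new collection time $t_N$ when the strict-feasibility check \eqref{eq:triggering_condition} fails, so infeasibility at $t_{N'}$ holds by the very definition of $t_{N'}$.

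The key quantitative step is bounding $t'-t_{N'}$, and it is where I expect the bookkeeping to be most delicate. After the collection triggered at $t_{N'}$, the algorithm explores for exactly $\Delta t$ and re-checks feasibility at $t_{N'}+\Delta t$. I would split into two cases. If the filter is still infeasible at $t_{N'}+\Delta t$, a new collection triggers there, so $t_{N'+1}=t_{N'}+\Delta t$; maximality of $t_{N'}$ among collection times not exceeding $t'$ then forces $t_{N'}+\Delta t>t'$. If instead feasibility is recovered at $t_{N'}+\Delta t$, then $\policy_{N,\textrm{safe}}$ is applied afterward and no further collection occurs before $t'$; were $t'\ge t_{N'}+\Delta t$, the system would remain strictly feasible throughout $[t_{N'}+\Delta t,t']$ (since every loss of feasibility triggers a collection), making it strictly feasible at $t'$ and contradicting the first claim. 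Either way $t'<t_{N'}+\Delta t$, i.e.\ $t'-t_{N'}<\Delta t$.

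Finally I would assemble the estimate. Using \Cref{assumption:knownboundontimederivative}, $\norm{\x(t')-\x(t_{N'})}_2\le\int_{t_{N'}}^{t'}\norm{\dot{x}}_2\,ds\le\Lsystem(t'-t_{N'})$, and chaining the Lipschitz constants of $h$ (\Cref{assumption:Lipschitzh}) and of $\classkfun$ yields $\vert\classkfun(h(\x(t_{N'})))-\classkfun(h(\x(t')))\vert\le L_\classkfun L_h\Lsystem(t'-t_{N'})<L_\classkfun L_h\Lsystem\Delta t$. The main obstacle is not any hard calculation but the case analysis establishing $t'-t_{N'}<\Delta t$: it requires tracking the explore/check loop of \Cref{alg:feasibility_recovering_algorithm_temporal} precisely and invoking the monotonicity conclusion of \Cref{lemma:feasible_trajectories_are_epsilon_safe} to exclude the feasibility-recovery scenario.
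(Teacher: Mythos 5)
Your proposal is correct and follows essentially the same route as the paper: infeasibility at $t'$ via the monotonicity conclusion (item (iii)) of \Cref{lemma:feasible_trajectories_are_epsilon_safe}, infeasibility at $t_{N'}$ from the definition of the collection times, the bound $t'-t_{N'}<\Delta t$ from the sampling schedule, and then the chained Lipschitz estimate using \Cref{assumption:knownboundontimederivative} and \Cref{assumption:Lipschitzh}. Your explicit two-case analysis of whether feasibility is recovered at $t_{N'}+\Delta t$ is in fact slightly more careful than the paper's terse assertion $\Delta t = t_{N'+1}-t_{N'}$, which strictly speaking only holds in the first of your two cases, but the underlying argument is the same.
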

\begin{proof}
We begin by showing that the safety filter is not strictly feasible at time $t'$ by contradiction. Hence, assume that the safety filter is strictly feasible at $t'$. Since $\classkfun(h(\x(t'))) \leq \frac{\epsilon}{2}$, by \Cref{item:lemmafeasibilityandsafety3} in \Cref{lemma:feasible_trajectories_are_epsilon_safe}, $\classkfun(h(\x(t)))\geq \frac{\epsilon}{2}$ holds for some $t\in (t',t'']$. This contradicts \eqref{eq:intervaldefinition}.

We now show that strict feasibility does not hold at $t_{N'}$ by contradiction. Hence, assume strict feasibility holds at $t_{N'}$. Due to the definition of $t_{N'}$ \eqref{eq:definitiontNprime}, a new data point is not collected until strictly after $t'$. This implies strict feasibility at time $t'$, which is a contradiction.

Since the safety filter is not strictly feasible at time $t_{N'}$, and $t_{N'}$ is the last time instance when data is collected before $t'$, from the sampling frequency $\Delta t$, we have
\[\Delta t = t_{N'+1} - t_{N'} > t' - t_{N'}. \]
The result then follows from the Lipschitz continuity of $\classkfun$, $\x$ and $h$.
\end{proof}

We now show that if the closed-loop system ever exits the safe set, it must collect at least $\DNmax$ data points before doing so.

 \begin{lemma}
\label{lemma:minimum_collected_data_before_exiting}
     Let \Cref{assumption:knownboundontimederivative,assumption:Lipschitzh,assp:robust_cbf_feasibility,assumption:noise_cbf,assumption:finiteRKHSnorm_CBFS,assumption:locally_lipschitz_controller} hold and let $\x(0)$ be strictly within the safe set, such that $\classkfun(h(\x(0)))\geq \epsilon$. Then the following holds with probability at least $1-\delta$: if the closed-loop system under \Cref{alg:feasibility_recovering_algorithm_temporal} ever exits the safe set $\SafeSet$, then it collects at least $\DNmax$ data points before doing so.
 \end{lemma}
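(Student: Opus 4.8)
The plan is a proof by contradiction. Suppose the closed-loop trajectory under \Cref{alg:feasibility_recovering_algorithm_temporal} leaves $\SafeSet$, and let $T$ be the first time the boundary is reached, so that $\classkfun(h(\x(T)))=0$ while $\classkfun(h(\x(0)))\ge \epsilon$ by hypothesis. By continuity of $t\mapsto\classkfun(h(\x(t)))$ the value descends from at least $\epsilon$ down to $0$, and in particular it must traverse the protected band $\{\,\classkfun(h(\x))\le \tfrac{\epsilon}{2}\,\}$ on the way to the boundary. I would show that traversing this band all the way to $0$ already forces at least $\DNmax$ data points to have been collected, contradicting the assumption that exit occurred with fewer than $\DNmax$ observations.

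First I would isolate the final descent. Let $s$ be the last time before $T$ with $\classkfun(h(\x(s)))=\tfrac{\epsilon}{2}$; such $s$ exists by the intermediate value theorem (here the initial margin $\classkfun(h(\x(0)))\ge\epsilon$ guarantees the trajectory genuinely enters the band from above), and on $(s,T]$ we have $\classkfun(h(\x))<\tfrac{\epsilon}{2}$. The decisive structural fact on this window is \Cref{item:lemmafeasibilityandsafety2} (together with \Cref{item:lemmafeasibilityandsafety3}) in \Cref{lemma:feasible_trajectories_are_epsilon_safe}: whenever the robust safety filter \eqref{eq:gp_cbf_socp} is strictly feasible and $\classkfun(h(\x))<\tfrac{\epsilon}{2}$, the CBF value is nondecreasing. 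Hence every strict decrease of $\classkfun(h(\x))$ inside the band can occur only while the filter is infeasible, i.e.\ during the exploration phases of \Cref{alg:feasibility_recovering_algorithm_temporal}, each of which lasts exactly $\Delta t$ and produces exactly one collected data point.

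Second I would bound the decrease charged to a single exploration phase. On such a phase the state obeys a single locally Lipschitz closed loop, so by \Cref{assumption:knownboundontimederivative} we have $\norm{\dot{\x}}_2\le\Lsystem$; composing with the Lipschitz constant $L_h$ of $h$ (\Cref{assumption:Lipschitzh}) and the Lipschitz constant $L_\classkfun$ of $\classkfun$ yields $\bigl|\tfrac{d}{dt}\classkfun(h(\x(t)))\bigr|\le L_\classkfun L_h\Lsystem$, so the value can fall by at most $L_\classkfun L_h\Lsystem\,\Delta t$ over one phase. This is exactly the quantity in \Cref{lemma:lasttimestepwasinfeasible}, which I would invoke to attach to each maximal descending sub-interval inside the band the most recent (infeasible) data-collection instant $t_{N'}$, whose CBF value lies within $L_\classkfun L_h\Lsystem\Delta t$ of the value where that descent begins; this is precisely what prevents the value from slipping below $0$ between collection instants.

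Finally I would count. If $n$ exploration phases act inside the band, the total decrease they can produce is at most $n\,L_\classkfun L_h\Lsystem\,\Delta t$, while feasible stretches only raise the value; since the net descent from $s$ to $T$ equals $\tfrac{\epsilon}{2}$, this forces $n\,L_\classkfun L_h\Lsystem\,\Delta t\ge \tfrac{\epsilon}{2}$, and the sampling-time specification \eqref{eq:sampling_frequency_specification} is the ingredient that converts this descent budget into the count $n\ge \DNmax$, giving the contradiction. I expect the main obstacle to be the rigorous bookkeeping of this count on a \emph{non-monotone} trajectory: the value may leave and re-enter the band repeatedly, so I must partition $(s,T]$ into maximal descending sub-intervals, apply \Cref{lemma:lasttimestepwasinfeasible} on each, charge exploration phases straddling sub-interval endpoints exactly once, and verify that the precise constants and the inequality direction in \eqref{eq:sampling_frequency_specification} line up so that the accumulated per-phase bounds reach $\tfrac{\epsilon}{2}$ only after at least $\DNmax$ phases.
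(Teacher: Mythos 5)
Your proposal is correct and follows essentially the same route as the paper's proof: isolate the final descent from $\classkfun(h(\x))=\tfrac{\epsilon}{2}$ to the boundary, use \Cref{item:lemmafeasibilityandsafety3} of \Cref{lemma:feasible_trajectories_are_epsilon_safe} to show the value cannot decrease on feasible stretches inside the band, bound the per-sampling-interval decrease by $L_\classkfun L_h \Lsystem \Delta t$ via \Cref{assumption:knownboundontimederivative} and \Cref{lemma:lasttimestepwasinfeasible}, and count intervals against the budget $\tfrac{\epsilon}{2}$ using \eqref{eq:sampling_frequency_specification}. The bookkeeping concern you flag (non-monotone re-entry into the band) is resolved exactly as you suggest, by taking the \emph{last} crossing of $\tfrac{\epsilon}{2}$ before exit, which is what the paper's definition of $t_{\frac{\epsilon}{2}}$ does.
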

 \begin{proof}
Assume that the closed-loop trajectory $\x(t)$ reaches the boundary of the safe set $\SafeSet$ at some time $t_{\text{exit}}$ before exiting it:
\begin{align}
\label{eq:texit}
\begin{split}
t_{\text{exit}} = \sup \big\{ \hat{t} \geq 0 \ \big\vert \  &\classkfun(h(\x(t)) \geq 0\ \forall \ t\in[0, \hat{t}]\big\}.
\end{split}
\end{align}
Due to the continuity of $\x$, $\classkfun$ and $h$, and $\classkfun(h(\x(0)))\geq \epsilon$, there exists a time $t_{\frac{\epsilon}{2}}<t_{\text{exit}}$, such that 
\begin{align}
\label{eq:tNepsilon}
\begin{split}
t_{\frac{\epsilon}{2}} = \sup \Big\{ \hat{t} \geq 0 \ \Big\vert \  \classkfun(h(\x(\hat{t})) =\frac{\epsilon}{2}, \quad \classkfun(h(\x(t))< \frac{\epsilon}{2}, \quad  \forall \ t \in (\hat{t},t_{\text{exit}}]\Big\}.
\end{split}
\end{align} Let $N_{\frac{\epsilon}{2}}$ and $N_{\text{exit}}$ denote the number of collected observations at time $t_{\frac{\epsilon}{2}}$ and $t_{\text{exit}}$, respectively:
\begin{align*}
N_{\frac{\epsilon}{2}} &= \arg\max_{N \in \mathbb{N}} t_N, \quad \text{s.t.} \quad t_N\leq t_{\frac{\epsilon}{2}}\\
N_{\text{exit}} &= \arg\max_{N \in \mathbb{N}} t_N, \quad \text{s.t.} \quad t_N\leq t_{\text{exit}}.\end{align*}
Then, with probability at least $1-\delta$: Due to \Cref{lemma:lasttimestepwasinfeasible}, we have
\begin{align}
\label{eq:inequalities_before_exiting}
    \vert\classkfun(h(\x(t_{N_{\frac{\epsilon}{2}}}))) - \classkfun(h(\x(t_{\frac{\epsilon}{2}})))\vert & \leq  L_\classkfun L_h \Lsystem\Delta t \\
    \vert \classkfun(h(\x(t_{N_{\text{exit}}}))) -\classkfun(h(\x(t_{\text{exit}})))\vert &\leq L_\classkfun L_h \Lsystem\Delta t .
\end{align}
We now analyze the growth of $\classkfun(h(\x(t)))$ between $t=t_N$ and $t=t_{N+1}$. We separately analyze the case where the safety filter is strictly feasible and the case where it is not. If the safety filter is not strictly feasible at time $t_{N-1}$, due to \Cref{assumption:knownboundontimederivative} and the sampling frequency $\Delta t$, the CBF changes between $t_{N-1}$ and $t_N$ at most by
    \begin{align}
    \label{eq:Lipschitz_sampling}
    \begin{split}
        &\vert \classkfun(h(\x(t_N))) - \classkfun(h(\x(t_{N-1}))) \vert\\ \leq&  \Lsystem L_\classkfun L_h \vert t_N - t_{N-1} \vert \\ \leq &\Lsystem L_\classkfun L_h \Delta t.
        \end{split}
    \end{align}
     By \Cref{item:lemmafeasibilityandsafety3} in \Cref{lemma:feasible_trajectories_are_epsilon_safe}, if the safety filter is strictly feasible at time $t_{N-1}$ and $\classkfun(h(\x(t_{N-1})))\leq \frac{\epsilon}{2}$, then $\classkfun(h(\x(t_{N})))\geq \classkfun(h(\x(t_{N-1}))$. 
    Hence, for all $t_{N-1}\in [t_{\frac{\epsilon}{2}}, t_{\text{exit}}]$, whether the safety filter is feasible or not, 
    \begin{align}\label{eq:inequalities_while_exiting}
    \classkfun(h(\x(t_{N-1})) - \classkfun(h(\x(t_N)))\leq \Lsystem L_\classkfun L_h \Delta t\end{align}
holds. We then have
    \begin{align*}
        \frac{\epsilon}{2} = &\classkfun(h(\x(t_{\frac{\epsilon}{2}})) - \classkfun(h(\x(t_\text{exit}))\\ = &\classkfun(h(\x(t_{\frac{\epsilon}{2}})) - \classkfun(h(\x(t_\text{exit})) + \classkfun(h(\x(t_{N_\text{exit}}))  -\classkfun(h(\x(t_{N_{\frac{\epsilon}{2}}}))- \classkfun(h(\x(t_{N_\text{exit}})) +\classkfun(h(\x(t_{N_{\frac{\epsilon}{2}}})) \\ 
        \leq & \classkfun(h(\x(t_{N_{\frac{\epsilon}{2}}})) - \classkfun(h(\x(t_{N_\text{exit}})) + 2L_\classkfun L_h \Lsystem\Delta t   \\
 = & \hspace{-0.1cm}\sum_{N=N_{\frac{\epsilon}{2}}+1}^{N_\text{exit}}  \hspace{-0.1cm}\classkfun(h(\x(t_{N-1})))- \classkfun(h(\x(t_{N})))  + 2L_\classkfun L_h \Lsystem\Delta t\\
 \leq &(N_\text{exit}- N_{\frac{\epsilon}{2}}-1) L_\classkfun L_h \Lsystem\Delta t  + 2L_\classkfun L_h \Lsystem\Delta t \\
 =& (N_\text{exit}+1- N_{\frac{\epsilon}{2}})L_\classkfun L_h \Lsystem\Delta t,
    \end{align*}
where the first inequality is due to \eqref{eq:inequalities_before_exiting}, and the second inequality is due to \eqref{eq:inequalities_while_exiting}. This implies
\begin{align*}
    N_\text{exit}- N_{\frac{\epsilon}{2}}+1\geq \frac{\epsilon}{2L_\classkfun L_h \Lsystem\Delta t} > \DNmax,
\end{align*}
where the last equality follows from the sampling frequency specification \eqref{eq:sampling_frequency_specification}. Since $N_\text{exit}$, $ N_{\frac{\epsilon}{2}}$ and $\DNmax$ are integers, this implies that the amount of collected data satisfies $N_\text{exit}- N_{\frac{\epsilon}{2}}\geq \DNmax$.
 \end{proof}

We now put everything together and prove \Cref{thm:mainresult}.

\begin{proof}[Proof of \Cref{thm:mainresult}]
By \Cref{lemma:minimum_collected_data_before_exiting}, \Cref{alg:feasibility_recovering_algorithm_temporal} collects at least $\DNmax$ observations before ever exiting the safe set $\SafeSet$. \Cref{label:strict_feasibility_after_enough_data} states that the safety filter is strictly feasible within all of $\SafeSet$ after collecting at most $\DNmax$ observations. Together with \Cref{corollary:feasibilityevverywhereimpliessafety} this implies that, with probability at least $1-\delta$, the closed-loop system is safe and \Cref{alg:feasibility_recovering_algorithm_temporal} collects at most $\DNmax$ observations.

\end{proof}

\section{Kernel-Dependent Bounds for $\DNmax$}
\label{appendix:kerneldependentbound}

\Cref{thm:mainresult} specifies that 
\begin{align}
\label{eq:Nmaxagain}
  \DNmax >   \frac{32 \beta_{\DNmax}^2 L_h^2}{\epsilon^2\log{\left(1 + \sigmains^{-2}\right)}} \sum\limits_{i=1}^\dimx \gamma_{i,\DNmax} 
\end{align}
is a sufficient condition to guarantee safety. We now analyze how the right-hand side term behaves for different kernels. We begin by analyzing how the maximal information gain $\gamma_{i,N}$ depends on the maximal information gain of the kernels $k_{f_i}$ and $k_{g_{i,j}}$ used to model $\f$ and $\g$.

\begin{lemma}
\label{lemma:compositekernel}
    Let $k_i$ be given as in \eqref{eq:compositekernel}, and let $\gamma_{f_i,N}$ and $\gamma_{g_{i,j},N}$ denote the maximal information gain after $N$ rounds of data collection under the kernels $k_{f_i}$ and $k_{g_{i,j}}$, respectively. Then
    \begin{align}
        \gamma_{i,N}\leq \gamma_{f_i,N}+ \sum_{j=1}^\dimu\gamma_{g_{i,j},N} + 3\dimu\log(N)
    \end{align}
\end{lemma}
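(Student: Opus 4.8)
The plan is to exploit the layered structure of the composite kernel \eqref{eq:compositekernel}, namely $k_i = k_{f_i} + \sum_{j=1}^{\dimu}\tilde{k}_{i,j}$, where $\tilde{k}_{i,j}(\xaug,\xaug')\triangleq u_j\,k_{g_{i,j}}(\x,\x')\,u_j'$ is the product of the base kernel $k_{g_{i,j}}$ with the rank-one linear kernel $\kappa_j(\xaug,\xaug')\triangleq u_j u_j'$. I would first reduce the information gain of a \emph{sum} of kernels to the sum of the individual information gains, and then treat each \emph{product} kernel $\tilde{k}_{i,j}$ separately, where the one-dimensional linear factor is what produces the logarithmic correction.

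For the additive step, fix any set of $N$ inputs attaining $\gamma_{i,N}$ and let $\mb{K}_{f_i}$ and $\tilde{\mb{K}}_{i,j}$ be the associated Gram matrices. The key tool is the determinant inequality $\det(\mb{I}+\mb{A}+\mb{B})\le\det(\mb{I}+\mb{A})\det(\mb{I}+\mb{B})$ for $\mb{A},\mb{B}\succeq 0$, which follows from $\det(\mb{I}+\mb{A}+\mb{B})=\det(\mb{I}+\mb{A})\det\!\big(\mb{I}+(\mb{I}+\mb{A})^{-1/2}\mb{B}(\mb{I}+\mb{A})^{-1/2}\big)$ together with $(\mb{I}+\mb{A})^{-1}\preceq\mb{I}$ and monotonicity of $\det$. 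Applying it inductively to the $\dimu+1$ summands gives
\begin{align*}
2\gamma_{i,N}=\log\det\!\Big(\mb{I}+\sigmains^{-2}\big(\mb{K}_{f_i}+\textstyle\sum_{j}\tilde{\mb{K}}_{i,j}\big)\Big)\le\log\det(\mb{I}+\sigmains^{-2}\mb{K}_{f_i})+\sum_{j}\log\det(\mb{I}+\sigmains^{-2}\tilde{\mb{K}}_{i,j}).
\end{align*}
Each term on the right is evaluated at a fixed input set and is therefore bounded by its maximal value, yielding $\gamma_{i,N}\le\gamma_{f_i,N}+\sum_{j}\gamma_{\tilde{k}_{i,j},N}$ with no logarithmic loss.

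The decisive step is bounding each $\gamma_{\tilde{k}_{i,j},N}$ by $\gamma_{g_{i,j},N}+3\log N$. Writing $\mb{D}_j=\mathrm{diag}(u_j^{(1)},\dots,u_j^{(N)})$ for a candidate input set, the Gram matrix of $\tilde{k}_{i,j}$ is $\tilde{\mb{K}}_{i,j}=\mb{D}_j\mb{K}_{g_{i,j}}\mb{D}_j$. Since $\mb{D}_j\mb{K}_{g_{i,j}}\mb{D}_j$ and $\mb{K}_{g_{i,j}}^{1/2}\mb{D}_j^2\mb{K}_{g_{i,j}}^{1/2}$ share the same spectrum, and $\mb{D}_j^2\preceq u_{\max}^2\mb{I}$ with $u_{\max}\triangleq\sup_{u\in\U}\|u\|_\infty<\infty$ (finite by compactness of $\U$), one obtains the per-eigenvalue domination $\lambda_\ell(\tilde{\mb{K}}_{i,j})\le u_{\max}^2\,\lambda_\ell(\mb{K}_{g_{i,j}})$ for all $\ell$. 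Equivalently, in the feature view $\tilde{k}_{i,j}$ has feature map $\xaug\mapsto u_j\phi_{g_{i,j}}(\x)$, so its information gain is governed by $\sum_q u_q^2\phi_q\phi_q^\top$, a weighted version of the matrix governing $\gamma_{g_{i,j},N}$. The reduction then amounts to the product-composite-kernel information-gain bound of Krause and Ong, instantiated with the \emph{rank-one} linear kernel $\kappa_j$: a multiplicative factor of rank $d$ inflates the log-determinant by at most $O(d\log N)$, and here $d=1$ gives the $+3\log N$ per product term. Summing over $j$ and combining with the additive bound yields $\gamma_{i,N}\le\gamma_{f_i,N}+\sum_{j=1}^{\dimu}\gamma_{g_{i,j},N}+3\dimu\log N$.

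The main obstacle is precisely the product step. A naive route replaces every eigenvalue $\lambda_\ell$ by $u_{\max}^2\lambda_\ell$ inside $\sum_\ell\log(1+\sigmains^{-2}\lambda_\ell)$, which costs $2\log u_{\max}$ per eigenvalue and hence an unacceptable $O(N)$ correction; the content of the lemma is that the correction is only $O(\log N)$. Making this rigorous requires genuinely using that the linear factor $\kappa_j$ is one-dimensional, so that the input-dependent scaling $\mb{D}_j$ acts like a single extra "context" coordinate rather than a full rescaling of the spectrum — this is what keeps the penalty logarithmic in $N$ and is where the careful determinant/low-rank argument (as in the contextual GP bandit analysis) must be invoked rather than the crude eigenvalue bound.
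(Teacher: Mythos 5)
Your proof follows essentially the same route as the paper's, which simply invokes \citet[Theorems 2 and 3]{krause2011contextual} for the additive and product structure of the composite kernel: your treatment of $u_j k_{g_{i,j}}(\x,\x')u_j'$ as a rank-one linear kernel multiplied by a base kernel is exactly the intended application of their product-kernel bound, and you correctly identify that the naive eigenvalue-rescaling argument would incur an unacceptable $O(N)$ penalty. Two small points. First, your direct determinant argument for the additive step incurs no logarithmic loss, so the per-dimension penalty you actually obtain is $+\log N$ from the rank-one product, not $+3\log N$; this is tighter than the stated bound (whose factor $3$ comes from combining the $+2\log N$ of the cited additive theorem with the $+\log N$ of the product theorem), so the lemma still follows a fortiori, but your accounting sentence as written does not add up. Second, the inequality $\det(\mb{I}+\mb{A}+\mb{B})\leq\det(\mb{I}+\mb{A})\det(\mb{I}+\mb{B})$ is true but does not follow from ``$(\mb{I}+\mb{A})^{-1}\preceq\mb{I}$ and monotonicity of $\det$'': with $\mb{C}=(\mb{I}+\mb{A})^{-1/2}$ one can have $\mb{C}\mb{B}\mb{C}\not\preceq\mb{B}$ even though $\mb{0}\preceq\mb{C}\preceq\mb{I}$; what you need is the ordered eigenvalue domination $\lambda_\ell(\mb{C}\mb{B}\mb{C})\leq\lambda_\ell(\mb{B})$ (Ostrowski), which does yield $\det(\mb{I}+\mb{C}\mb{B}\mb{C})\leq\det(\mb{I}+\mb{B})$.
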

\begin{proof}
    The proof follows directly the definition of the composite kernel $k_i$, \citet[Theorem 2]{krause2011contextual} and \citet[Theorem 3]{krause2011contextual}.
\end{proof}

We now dissect the right-hand side of \eqref{eq:Nmaxagain}. Define $\gamma_{\DNmax} \triangleq \max_{i\leq n} \gamma_{i, \DNmax}$ and $B \triangleq \max_{i\leq n} B_{i}$. From the definition of $\beta_{\DNmax}$, it follows that 
\begin{align*}
    &\beta_{\DNmax}^2 \sum\limits_{i=1}^\dimx \gamma_{i,\DNmax} \leq \beta_{\DNmax}^2 \dimx\gamma_{\DNmax} \\ =& \dimx\gamma_{\DNmax}B^2 + \dimx\gamma_{\DNmax} 2 B\sigmains\sqrt{2\left(\gamma_{\DNmax} + 1 + \log{(\dimx\delta^{-1})}\right) } + \dimx\gamma_{\DNmax} \sigmains^22\left(\gamma_{\DNmax} + 1 + \log{(\dimx\delta^{-1})}\right) 
    \\ \leq & C_1(B, \sigmains, n) \gamma_{\DNmax} + C_{\frac{3}{2}}(B, \sigmains, n) \gamma_{\DNmax}^{\frac{3}{2}} + C_{2}(B, \sigmains, n) \gamma_{\DNmax}^{2} 
    \\\leq & \left( C_1(B, \sigmains, n) + C_{\frac{3}{2}}(B, \sigmains, n)  + C_{2}(B, \sigmains, n) \right)\max\{\gamma_{\DNmax}^{2},1\} \\ \leq &  \left( C_1(B, \sigmains, n) + C_{\frac{3}{2}}(B, \sigmains, n)  + C_{2}(B, \sigmains, n) \right)(\gamma_{\DNmax}^{2}+1),
\end{align*}
where 
\begin{align*}
    &C_1(B, \sigmains, n) = \dimx B^2 + \dimx 2 B\sigmains\sqrt{2\left( 1 + \log{(\dimx\delta^{-1})}\right) } + \dimx\sigmains^22\left( 1 + \log{(\dimx\delta^{-1})}\right ) \\
    &C_{\frac{3}{2}}(B, \sigmains, n)  = \dimx 2 B\sigmains\sqrt{2}\\
    &C_{2}(B, \sigmains, n) =2\dimx \sigmains^2.
\end{align*}

For most commonly used kernels, the growth rate of the maximal information gain is of the form $\mathcal{O}(N^\omega \log(N))$, where $0\leq\omega<1$ \cite{Srinivas2012}. Hence, using \Cref{lemma:compositekernel}, we can bound $\gamma_{N}$ as $\gamma_{N}\leq C_{\gamma}N^\omega (\log(N))^\theta$ for some positive $C_{\gamma}$, $\omega$ and $\theta$, and $N$ large enough. Using an estimate of this type, we can compute a (potentially crude) upper bound for the right-hand side of \eqref{eq:Nmaxagain} as
\begin{align}
\label{eq:Nmaxagain}
  \frac{32 \beta_{\DNmax}^2 L_h^2}{\epsilon^2\log{\left(1 + \sigmains^{-2}\right)}} \sum\limits_{i=1}^\dimx \gamma_{i,\DNmax} \leq C(B, \sigmains, n)\left(C_{\gamma} \DNmax^{2\omega} (\log(N))^\theta +1\right) \leq \tilde{C}(B, \sigmains, n) \DNmax^{2\omega} (\log(N))^{2\theta}
\end{align}
where $C$ and $\tilde{C}$ are appropriate constants. This yields the sufficient condition for choosing $\DNmax$
\begin{align}
    \DNmax \geq \left(\tilde{C}(B, \sigmains, n) (\log(\DNmax))^{2\theta}\right)^{\frac{1}{1- 2\omega}},
\end{align}
which can easily be solved using numerical methods. For squared-exponential kernels, $\omega= 0$ and $\theta= \dimx$. For linear kernels, $\omega= 0$ and $\theta= 1$, whereas for Matérn kernels, $\theta= 0$ and $\omega$ depends on its smoothness parameter \cite{Srinivas2012}.

\end{document}